\newtheorem{theorem}{Theorem}
\newtheorem{lemma}{Lemma}
\newtheorem{proposition}{Proposition}
\newtheorem{definition}{Definition}
\begin{document}

\title{Learning rates of $l^q$ coefficient regularization learning with Gaussian kernel}

\author{Shaobo Lin,~Jinshan Zeng,~Jian Fang
        and~Zongben Xu% <-this % stops a space
\IEEEcompsocitemizethanks{\IEEEcompsocthanksitem S. Lin, J. Zeng, J.
Fang and Z. Xu are  with the Institute for Information and System
Sciences, School of Mathematics and Statistics, Xi'an Jiaotong
University, Xi'an 710049, P R China}}

\IEEEcompsoctitleabstractindextext{%
\begin{abstract}
Regularization is a well recognized powerful strategy to improve the
performance of a learning machine and $l^q$ regularization schemes
with $0<q<\infty$ are  central in use. It is known that different
$q$ leads to different properties of the deduced estimators, say,
$l^2$ regularization leads to smooth estimators while $l^1$
regularization leads to sparse estimators. Then, how does the
generalization capabilities of $l^q$ regularization learning vary
with   $q$? In this paper, we study this problem in the framework of
statistical learning theory and show that implementing $l^q$
coefficient regularization schemes  in the sample dependent
hypothesis space associated with Gaussian kernel can attain the same
almost optimal learning rates for all $0<q<\infty$. That is,  the
upper and lower bounds of learning rates for $l^q$ regularization
learning  are asymptotically identical for all $0<q<\infty$. Our
finding tentatively reveals that, in some modeling contexts, the
choice of $q$ might not have a strong impact with respect to the
generalization capability. From this perspective, $q$ can be
arbitrarily specified, or specified merely by other no
generalization criteria like smoothness, computational complexity,
sparsity, etc..
\end{abstract}

\begin{IEEEkeywords}
Learning theory, Sample dependent hypothesis space, $l^q$
regularization learning,  Gaussian kernel.
\end{IEEEkeywords}}

% make the title area
\maketitle

\IEEEdisplaynotcompsoctitleabstractindextext

\IEEEpeerreviewmaketitle

\section{Introduction}

Many scientific questions boil down to  learning  an underlying rule
from finitely many input-output samples. Learning means synthesizing
a  function  that can represent or approximate the underlying rule
based on the samples. A learning system is normally developed for
tackling such a supervised learning problem. Generally speaking, a
learning system should comprise a hypothesis space, an optimization
strategy and a learning algorithm. The hypothesis space is a family
of parameterized functions that regulate the forms and properties of
the estimator to be found. The optimization strategy depicts  the
sense in which the estimator is defined, and the learning algorithm
is an inference process to yield the objective estimator. A central
question of learning is and will always be: how well does the
synthesized function generalize to reflect the reality that the
given ``examples'' purport to show us.

A recent trend in supervised learning  is to utilize the  kernel
approach, which takes a reproducing kernel Hilbert space (RKHS)
\cite{Cucker2001} associated with a positive definite kernel as the
hypothesis space. RKHS is a Hilbert space of functions in which the
pointwise evaluation is a continuous linear functional. This
property makes the sampling stable and effective, since the samples
available for learning are commonly modeled by point evaluations of
the unknown target function.  Consequently, various learning schemes
based on RKHS such as the regularized least squares (RLS)
\cite{Cucker2001,Wu2006,Steinwart2009} and support vector machine
(SVM) \cite{Scholkopf2001,Steinwart2007} have triggered enormous
research activities in the last decade. From the point of view of
statistics, the kernel approach is proved to possess perfect
learning capabilities \cite{Wu2006,Steinwart2009}. From the
perspective of implementation, however, kernel methods can be
attributed to such a procedure:  to deduce an estimator  by using
the linear combination of finitely many  functions, one firstly
tackles the problem in an infinitely dimensional space and then
reduces the dimension by utilizing a certain optimization technique.
Obviously, the infinite dimensional assumption of the hypothesis
space brings many difficulties to the implementation and computation
in practice.

This  phenomenon was firstly observed in \cite{Wu2008}, where Wu and
Zhou  suggested the use of  the sample dependent hypothesis space
(SDHS) directly to construct the estimators. From the so-called
representation theorem in learning theory \cite{Cucker2001}, the
learning procedure in   RKHS   can be converted into such a problem,
whose hypothesis space can be expressed as a linear combination of
the kernel functions evaluated at the sample points with finitely
many coefficients. Thus, it implies that the generalization
capabilities of learning in SDHS are not worse than those of
learning  in RKHS in  certain a sense. Furthermore, as SDHS is an
$m$-dimensional linear space, various optimization strategies such
as the coefficient-based regularization strategies
\cite{Shi2011,Wu2008} and greedy-type schemes
\cite{Barron2008,Lin2013a} can be applied to construct the
estimator.

In this paper, we consider the general coefficient-based
regularization strategies in SDHS. Let
$$
             \mathcal H_{K,{\bf
             z}}:=\left\{\sum_{i=1}^ma_iK_{x_i}:a_i\in\mathbf
             R\right\}
$$
be a SDHS, where $K_t(\cdot)=K(\cdot,t)$ and $K(\cdot,\cdot)$ is a
positive definite kernel. The coefficient-based $l^q$ regularization
strategy ($l^q$ regularizer) takes the form of
\begin{equation}\label{algorihtm}
           f_{{\bf z},\lambda,q}=\arg\min_{f\in\mathcal H_{K,{\bf
           z}}}\left\{\frac1m\sum_{i=1}^m(f(x_i)-y_i)^2+\lambda\Omega^q_{\bf
           z}(f)\right\},
\end{equation}
where $\lambda=\lambda(m,q)>0$ is the regularization parameter and
$\Omega_{\bf z}^q(f)$ $(0< q<\infty)$ is defined by
$$
            \Omega^q_{\bf z}(f)=\sum_{i=1}^m|a_i|^q\ \mbox{when}\
            f=\sum_{i=1}^ma_iK_{x_i}\in \mathcal H_{K,{\bf z}}.
$$

\subsection{Problem setting}

 In practice, the choice of $q$ in (\ref{algorihtm}) is   critical, since it
embodies the properties of the anticipated estimators such as
sparsity and smoothness, and also takes some other perspectives such
as complexity  and  generalization capability into consideration.
For example, for $l^2$ regularizer, the solution to
(\ref{algorihtm}) is the same as the solution to the  regularized
least squares (RLS) algorithm in RKHS \cite{Cucker2001}
\begin{equation}\label{RLS}
           f_{{\bf z},\lambda}=\arg\min_{f\in H_K}\left\{ \frac1m\sum_{i=1}^m(f(x_i)-y_i)^2+\lambda\|f\|^2_{H_K} \right\},
\end{equation}
where $H_K$ is the RKHS associated with the kernel $K$. Furthermore,
the solution  can be analytically represented by the kernel function
\cite{Cucer2007}. The obtained solution, however, is smooth but not
sparse, i.e.,
 the nonzero coefficients of the solution are potentially as many as the
sampling points if no special treatment is taken. Thus, $l^2$
regularizer is a good smooth regularizer but not a   sparse one. For
$0< q<1$, there are many algorithms such as the iteratively
reweighted least squares algorithm \cite{Daubechies2010} and
iterative half thresholding  algorithm \cite{Xu2012} to obtain a
sparse approximation of the target function. However, all of these
algorithms suffer from the local minimum problem  due to the
non-convex natures. For $q=1$, many algorithms exist, say, iterative
soft thresholding algorithm \cite{Daubechies2004}, LASSO
\cite{Hastie2001,Tibshirani1995} and iteratively reweighted least
square algorithm \cite{Daubechies2010},
 to yield   sparse estimators of the target function. However,
as far as the sparsity is concerned, the $l^1$ regularizer is
somewhat worse than the $l^q$ $(0< q<1)$ regularizer, while as far
as the training speed is concerned, the $l^1$ regularizer is in turn
slower than that of the $l^2$ regularizer. Thus, we can see that,
different choices of $q$ may deduce estimators with different forms,
properties, and attributions. Since the study of generalization
capabilities lies in the center of learning theory, we  would like
to ask the following question:
    what about the generalization capabilities of the $l^q$ regularization schemes  (\ref{algorihtm}) for $0<q<\infty$?

Answering the above question  is of great importance, since it
uncovers the role of the penalty term in the regularization
learning, which then further underlies the learning strategies.
However, it is known that the approximation capability of SDHS
depends heavily on the choice of the kernel, it is therefore almost
impossible to give a general answer to the above question
independent of  kernel functions. In this paper, we  aim to provide
an  answer to the above question when the   widely used Gaussian
kernel is utilized.

\subsection{Related work and our contribution}

There exists a huge number of theoretical analysis of kernel
methods, many of which are   treated in
\cite{Cucker2001,Cucer2007,Eberts2011,Caponnetto2007,Steinwart2007,Scholkopf2001}
and references therein. This means that various results on the
learning rate of the algorithm (\ref{RLS}) are given.  The recent
work \cite{Mendelson2008} suggested that the penalty $\|f\|^2_{H_K}$
may not be the optimal choice from a statistical point of view, that
is, the RLS   strategy may have a design flaw. There may be an
appropriate choice of
 $q$ in the following optimization strategy
\begin{equation}\label{old q}
           f^q_{{\bf z},\lambda}=
           \arg\min_{f\in H_K}\left\{ \frac1m\sum_{i=1}^m(f(x_i)-y_i)^2+\lambda\|f\|^q_{H_K} \right\}
\end{equation}
such that the performance of learning process can be improved. To
this end, Steinwart et al. \cite{Steinwart2009} derived a
$q$-independent optimal learning rate of  (\ref{old q}) in the
minmax sense. Therefore, they concluded that the RLS strategy
(\ref{RLS}) has no advantages or disadvantages compared to other
values of $q$ in (\ref{old q}) from the viewpoint of learning
theory. However, even without such a result, it is unclear  how to
solve  (\ref{old q}) when $q\neq 2$. That is,  $q = 2$ is currently
the only feasible case, which in turn makes RLS strategy the method
of choice.

Differently, $l^q$ coefficient regularization strategy
(\ref{algorihtm}) is solvable for arbitrary $0<q<\infty$. Thus,
studying the   learning performance of the  strategy
(\ref{algorihtm}) with different $q$ is more interesting. Based on a
series of work as
\cite{Feng2011,Shi2011,Sun2011,Tong2010,Wu2008,Xiao2010},
 we have shown  that there
is a positive definite kernel such that  the learning rate of the
corresponding  $l^q$ regularizer  is independent of  $q$ in the
previous paper \cite{Lin2013}. However, the problem is that the
kernel constructed in \cite{Lin2013} can not be easily formulated in
practice. Thus, seeking   kernels that possess the similar property
  and can be  easily implemented is worth of investigation.

Fortunately,  we  show in the present paper that the well known
Gaussian kernel possesses similar property, that is, as far as the
learning rate is concerned, all $l^q$ regularization schemes
(\ref{algorihtm}) associated with the Gaussian kernel  for $0<
q<\infty$ can realize the same almost optimal theoretical rates.
That is to say, the influence of $q$ on the learning rates of the
learning schemes (\ref{algorihtm}) with Gaussian kernel is
negligible. Here, we emphasize that our conclusion is based on the
understanding of attaining the same almost optimal learning rate by
appropriately tuning the regularization parameter $\lambda$. Thus,
in applications, $q$ can be arbitrarily specified, or specified
merely by other no generalization criteria (like complexity,
sparsity, etc.).

\subsection{Organization}

The reminder of the paper is organized as follows. In Section 2,
after reviewing some basic conceptions of statistical learning
theory, we give the main results of this paper, that is, the
learning rates
 of  $l^q$ $(0<q<\infty)$ regularizers  associated with Gaussian kernel   are provided.  In
 section 3, the proof of the main result is given.

\section{Generalization capabilities $l^q$ coefficient regularization
learning}

\subsection{A fast review of statistical learning theory}

Let $M> 0$, $X\subseteq \mathbf R^d$ be an input space and
$Y\subseteq [-M,M]$ be an output space. Let ${\bf
z}=(x_i,y_i)_{i=1}^m$ be a random sample set with a finite size
$m\in\mathbf N$, drawn independently and identically according to an
unknown distribution $\rho$ on $Z:=X\times Y$, which admits the
decomposition
$$
                    \rho(x,y)=\rho_X(x)\rho(y|x).
$$
Suppose further that $f:X\rightarrow Y$ is a function that one uses
to model the correspondence between $x$ and $y$, as induced by
$\rho$. A natural measurement of the error incurred by using $f$ of
this purpose is the generalization error, defined by
$$
                     \mathcal E(f):=\int_Z(f(x)-y)^2d\rho,
$$
which is minimized by the regression function \cite{Cucker2001},
defined by
$$
                     f_\rho(x):=\int_Yyd\rho(y|x).
$$
However, we do not know this ideal minimizer $f_\rho$ due to $\rho$
is unknown. Instead, we can turn to the random examples  sampled
according to $\rho$.

Let $L^2_{\rho_{_X}}$ be the Hilbert space of $\rho_X$ square
integrable function defined on $X$, with norm denoted by
$\|\cdot\|_\rho.$ Under the assumption $f_\rho\in L^2_{\rho_{_X}}$,
it is known that, for every $f\in L^2_{\rho_X}$, there holds
\begin{equation}\label{equality}
                     \mathcal E(f)-\mathcal E(f_\rho)=\|f-f_\rho\|^2_\rho.
\end{equation}
The task of the least squares regression problem is then to
construct function $f_{\bf z}$ that approximates $f_\rho$, in the
sense of norm $\|\cdot\|_\rho$, using the finitely many samples
${\bf z}$.

\subsection{ Learning rate analysis}

Let
$$
                G_\sigma(x,x'):= G_\sigma(x-x'):=\exp\{-\|x-x'\|_2^2/\sigma^{2}\},\
                x,x'\in X
$$
be the Gaussian kernel, where $\sigma >0$   is called the width of
$G_\sigma$. The SDHS associated with $G_\sigma(\cdot,\cdot)$ is then
defined by
$$
             \mathcal G_{\sigma,{\bf
             z}}:=\left\{\sum_{i=1}^ma_iG_\sigma(x_i,\cdot):a_i\in\mathbf
             R\right\}.
$$
We are concerned with
  the following $l^q$
coefficient-based regularization strategy
\begin{equation}\label{algorihtm1}
           f_{{\bf z},\lambda,q}=\arg\min_{f\in\mathcal G_{\sigma,{\bf
           z}}}\left\{\frac1m\sum_{i=1}^m(f(x_i)-y_i)^2+\lambda\sum_{i=1}^m|a_i|^q\right\},
\end{equation}
where $f(x)=\sum_{i=1}^ma_iG_\sigma(x_i,x)$. The main purpose of
this paper is to derive the optimal bound of the following
generalization error
\begin{equation}\label{target}
               \mathcal E(f_{{\bf z},\lambda,q})-\mathcal E(f_\rho)=\|f_{{\bf
               z},\lambda,q}-f_\rho\|^2_\rho
\end{equation}
for all $0<q<\infty$.

Generally, it is impossible to obtain a nontrivial rate of
convergence result of (\ref{target}) without imposing strong
restrictions on $\rho$ \cite[Chapter 3]{Gyorfy}  . Then a large
portion of learning theory proceeds under the condition that
$f_\rho$ is in a known set $\Theta$. A typical choice of $\Theta$ is
a set of compact sets, which are determined by some smoothness
conditions \cite{Devore2006}. Such a choice of $\Theta$ is also
adopted in our analysis. Let $X=\mathbf I^d:=[0,1]^d$, $c_0$ be a
positive constant, $v\in (0, 1]$, and $r=u+v$ for some $u\in
\mathbf{N}_{0}:=\{0\}\cup \mathbf{N}$.  A function $f:\mathbf
I^d\rightarrow \mathbf{R}$ is said to be $(r,c_0)$-smooth if for
every $\alpha =(\alpha _{1},\cdots,\alpha_{d}),\alpha_{i}\in N_{0},$
$\sum _{j=1}^{d}\alpha_{j}=u$, the partial derivatives
$\frac{\partial ^{u}f}{\partial x{_{1}}^{\alpha_{1}}...\partial
x{_{d}}^{\alpha_{d}}}$ exist and satisfy
$$
             \left|\frac{\partial^{u}f}{\partial x{_{1}}^{\alpha_{1}}\cdots\partial
             x{_{d}}^{\alpha_{d}}}(x)
             -\frac{\partial ^{u}f}{\partial x{_{1}}^{\alpha_{1}}\cdots\partial
             x{_{d}}^{\alpha_{d}}}(z)\right |\leq c_0\|x-z\|^{v }_2.
$$
Denote by $\mathcal{F}^{(r,c_0)}$ the set of all $(r,c_0)$-smooth
functions. In our analysis, we assume the prior information
$f_\rho\in \mathcal{F}^{(r,c_0)}$ is known.

 Let
$\pi_Mt$ denote  the clipped value of $t$ at $\pm M$, that is,
$\pi_Mt:=\min\{M,|t|\}\mbox{sgn}t$, where $\mbox{sgn}t$ represents
the signum function of $t$. Then it is obvious
 \cite{Gyorfy,Steinwart2009,Zhou2006} that  for all $t\in\mathbf R$ and $y\in[-M,M]$ there
holds
$$
         \mathcal E(\pi_Mf_{{\bf z},\lambda,q})-\mathcal
         E(f_\rho)\leq \mathcal E(f_{{\bf z},\lambda,q})-\mathcal
         E(f_\rho).
$$

The following theorem shows the learning capability of the leaning
strategy (\ref{algorihtm1}) for arbitrary $0<q<\infty$.

\begin{theorem}\label{main result}
Let $r>0$, $c_0>0$,  $\delta\in (0,1)$, $0<q<\infty$, $f_\rho\in
\mathcal F^{r,c_0}$, and $f_{{\bf z},\lambda,q}$ be defined as in
(\ref{algorihtm1}). If $\sigma=m^{-\frac{1}{2r+d}}$, and
$$
            \lambda=\left\{\begin{array}{cc}M^2m^{\frac{-12r-6d+2rq+qd}{4r+2d}},&0<q\leq
             2.\\
             M^2m^{-\frac{4r+2d}{2r+d}}, &q>2,
             \end{array}\right.
$$
then, for arbitrary $\varepsilon>0$, with probability at least
$1-\delta$, there holds
\begin{equation}\label{th1}
           \mathcal E(\pi_Mf_{{\bf z},\lambda,q})-\mathcal
           E(f_\rho)\leq
           C\log\frac4{\delta}m^{-\frac{2r-\varepsilon}{2r+d}},
\end{equation}
 where $C$ is a constant depending
only on $d$, $r$, $c_0$, $q$ and $M$.
\end{theorem}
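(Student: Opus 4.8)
The plan is to follow the standard error-decomposition route of statistical learning theory, splitting the excess generalization error into a deterministically controlled \emph{approximation} part and a probabilistically controlled \emph{sample} part. Write $\mathcal E_{\bf z}(f):=\frac1m\sum_{i=1}^m(f(x_i)-y_i)^2$ for the empirical error and fix a reference function $f_\sigma\in\mathcal G_{\sigma,{\bf z}}$ to be specified below. The defining optimality of $f_{{\bf z},\lambda,q}$ in (\ref{algorihtm1}) gives $\mathcal E_{\bf z}(f_{{\bf z},\lambda,q})+\lambda\Omega_{\bf z}^q(f_{{\bf z},\lambda,q})\le\mathcal E_{\bf z}(f_\sigma)+\lambda\Omega_{\bf z}^q(f_\sigma)$, and inserting and subtracting the true error yields
\[
\mathcal E(\pi_Mf_{{\bf z},\lambda,q})-\mathcal E(f_\rho)\le\mathcal S_1+\mathcal S_2+\mathcal D,
\]
where $\mathcal S_1=[\mathcal E(\pi_Mf_{{\bf z},\lambda,q})-\mathcal E(f_\rho)]-[\mathcal E_{\bf z}(\pi_Mf_{{\bf z},\lambda,q})-\mathcal E_{\bf z}(f_\rho)]$ is a uniform empirical deviation over the hypothesis class, $\mathcal S_2=[\mathcal E_{\bf z}(f_\sigma)-\mathcal E_{\bf z}(f_\rho)]-[\mathcal E(f_\sigma)-\mathcal E(f_\rho)]$ is the deviation on the single fixed function $f_\sigma$, and $\mathcal D=\|f_\sigma-f_\rho\|_\rho^2+\lambda\Omega_{\bf z}^q(f_\sigma)$ is the regularized approximation error (using (\ref{equality})).

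The crux is the construction and analysis of $f_\sigma$. I would invoke the approximation theory of the Gaussian kernel for $(r,c_0)$-smooth functions to build, from the sample points $x_1,\dots,x_m$, coefficients $(a_i)$ so that $f_\sigma=\sum_i a_iG_\sigma(x_i,\cdot)$ obeys $\|f_\sigma-f_\rho\|_\rho^2\le C_1\sigma^{2r}$; with $\sigma=m^{-1/(2r+d)}$ this is already of the target order $m^{-2r/(2r+d)}$. Simultaneously I must track $\Omega_{\bf z}^q(f_\sigma)=\sum_i|a_i|^q$. Since each Gaussian bump has unit height and effective width $\sigma$ while about $m\sigma^d$ sample points fall within one width, the reconstruction coefficients scale heuristically like $|a_i|\sim M/(m\sigma^d)$, so that $\Omega_{\bf z}^q(f_\sigma)$ grows like a power of $m$ and $\sigma^{-1}$ that depends on $q$. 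For $q>2$ I would not construct a new approximant but reuse the $l^2$ (equivalently RKHS) estimate and invoke the norm monotonicity $\|a\|_q\le\|a\|_2$, which makes the penalty controllable by the $q=2$ bound; this is exactly why the prescribed $\lambda$ is flat in the regime $q>2$ and meets the $q\le2$ formula continuously at $q=2$.

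For the sample part I would bound $\mathcal S_2$ by Bernstein's inequality: $f_\sigma$ is fixed and uniformly bounded, and the variance of $(f_\sigma(x)-y)^2-(f_\rho(x)-y)^2$ is dominated by its mean $\|f_\sigma-f_\rho\|_\rho^2$, so $\mathcal S_2$ is, up to a $\log(1/\delta)$ factor, of the same order as $\mathcal D$. The genuine uniform deviation $\mathcal S_1$ is the second difficulty. The same optimality inequality (comparing against $f=0$) forces $\Omega_{\bf z}^q(f_{{\bf z},\lambda,q})\le M^2/\lambda$, so $f_{{\bf z},\lambda,q}$ lies in the ball $\mathcal B_R=\{f\in\mathcal G_{\sigma,{\bf z}}:\Omega_{\bf z}^q(f)\le R\}$ with $R=M^2/\lambda$. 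I would estimate the covering number of $\pi_M\mathcal B_R$ in the empirical metric using the uniform smoothness of the Gaussian kernel (its derivatives are bounded in terms of $\sigma^{-1}$, and the coefficients are controlled by $R^{1/q}$), obtaining an entropy that grows polynomially in $R$, in $\sigma^{-d}$, and in the inverse resolution; feeding this into a ratio-type (one-sided Bernstein / Talagrand) inequality for clipped functions lets half of $\mathcal S_1$ be absorbed into the left-hand side and leaves a remainder of order $(\log\mathcal N+\log(1/\delta))/m$ that depends on $\lambda$ through $R=M^2/\lambda$.

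The final step is a three-way balance: under $\sigma=m^{-1/(2r+d)}$ the approximation term $\sigma^{2r}$ is fixed at $m^{-2r/(2r+d)}$, while $\mathcal D$ grows with $\lambda$ (through $\lambda\Omega_{\bf z}^q(f_\sigma)$) and $\mathcal S_1$ grows as $\lambda$ decreases (through $R=M^2/\lambda$); equating these against the approximation order is what pins down the two stated expressions for $\lambda$, with the arbitrarily small $\varepsilon$ absorbing logarithmic and entropy factors. Collecting the confidence levels of the two concentration steps by a union bound then gives overall failure probability at most $\delta$. I expect the dominant obstacle to be the joint control in the construction of $f_\sigma$ --- attaining the optimal rate $\sigma^{2r}$ while keeping $\Omega_{\bf z}^q(f_\sigma)$ small, with the random sample points serving as centers --- since this is what forces the case distinction at $q=2$ and fixes the exponents in $\lambda$; the entropy estimate for the $l^q$-constrained Gaussian class is the secondary technical hurdle.
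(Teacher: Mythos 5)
Your decomposition is algebraically valid, but it conceals two genuine gaps, and both occur exactly where the paper's proof takes a different route. The first is the construction of $f_\sigma$. Your term $\mathcal D=\|f_\sigma-f_\rho\|_\rho^2+\lambda\Omega_{\bf z}^q(f_\sigma)$ needs an approximant $f_\sigma=\sum_{i=1}^ma_iG_\sigma(x_i,\cdot)$ whose centers are the \emph{random} sample points, achieving $\|f_\sigma-f_\rho\|_\rho^2\le C\sigma^{2r}$ with coefficients small enough that $\lambda\sum_i|a_i|^q$ has the same order. Your heuristic $|a_i|\sim M/(m\sigma^d)$ tacitly assumes that roughly $m\sigma^d$ samples land in every ball of radius $\sigma$, i.e., that $\rho_X$ has a density bounded away from zero; Theorem \ref{main result} assumes nothing about $\rho_X$, and if the marginal concentrates on an irregular or lower-dimensional set no such quadrature-type discretization exists. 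Even under a density assumption, rigorous random-node quadrature at accuracy $\sigma^r$ for arbitrary $r>0$ is a hard problem in its own right. The paper never solves this problem; it avoids it. Its approximant $f_0=K*F_\rho$ in (\ref{approximant}) is a convolution with a higher-order combination of Gaussians, is independent of the sample, and lives in the RKHS $\mathcal H_\sigma$, not in $\mathcal G_{\sigma,{\bf z}}$. The passage into the SDHS is handled by a third, hypothesis-error term $\mathcal P(m,\lambda,q)$: by the representer theorem the regularized least squares solution $f_{\bf z}=\sum_ib_iG_\sigma(x_i,\cdot)$ with $(I_m+G_\sigma[{\bf x}]){\bf b}={\bf y}$ lies in $\mathcal G_{\sigma,{\bf z}}$ automatically, its empirical regularized functional is compared to that of $f_0$ purely algebraically, and the penalty $\sum_i|b_i|^q$ is controlled through $b_i=y_i-f_{\bf z}(x_i)$ and $\mathcal E_{\bf z}(f_{\bf z})+\frac1m\|f_{\bf z}\|_\sigma^2\le M^2$ (Proposition \ref{hypothesiserror}). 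No population-level approximation by sample-centered functions is ever required.

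The second gap is your treatment of $\mathcal S_2$. You invoke Bernstein's inequality on the grounds that ``$f_\sigma$ is fixed and uniformly bounded,'' but in your construction $f_\sigma$ is built from $x_1,\dots,x_m$, hence is a random function correlated with the very sample appearing in $\mathcal E_{\bf z}(f_\sigma)$; the summands $(f_\sigma(x_i)-y_i)^2-(f_\rho(x_i)-y_i)^2$ are then neither independent nor correctly centered, and even conditionally on the design one has $\mathbf E[\mathcal E_{\bf z}(f_\sigma)\mid x_1,\dots,x_m]\neq\mathcal E(f_\sigma)$ in general. In the paper this Bernstein step is legitimate precisely because the fixed function there is the deterministic convolution $f_0$ (Proposition \ref{s1}). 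Your uniform deviation term $\mathcal S_1$ is sound in outline and matches the paper's Proposition \ref{s2}: the bound $\Omega_{\bf z}^q(f_{{\bf z},\lambda,q})\le M^2/\lambda$, an entropy estimate, and absorption of half the deviation into the left-hand side; note only that to reach the exponent $\frac{2r-\varepsilon}{2r+d}$ you must use the Gaussian RKHS entropy bound $\log\mathcal N_2(B_{H_\sigma},\varepsilon')\le C_{p,\mu,d}\sigma^{(p/2-1)(1+\mu)d}(\varepsilon')^{-p}$ with $p$ arbitrarily small, via the embedding $\|f\|_\sigma\le\sum_i|a_i|$, rather than a covering count based on smoothness of $x\mapsto G_\sigma(x_i,x)$, which degrades with $d$ and cannot give $m^{-2/(2+p)}$ arbitrarily close to $m^{-1}$. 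As written, the proposal does not prove the theorem; repairing the two gaps essentially forces you back to the paper's three-term decomposition (\ref{error decomposition}).
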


\subsection{Remarks}
In this subsection, we give certain explanations and remarks of
Theorem \ref{main result}. We depict it into four directions:
remarks on the learning rate, the choice of the width of Gaussian
kernel, the
%action
role of the regularization parameter, and the relationship between
$q$ and  the  generalization capability.

\subsubsection{Learning rate analysis}

It can be found in \cite{Gyorfy} and \cite{Devore2006} that if we
only know $f_\rho\in \mathcal F^{r,c_0}$, then the learning rates of
all  learning strategies based on $m$ samples can not be faster than
$m^{-\frac{2r}{2r+d}}.$
%To be  detail,
More specifically, let $\mathcal M(\mathcal F^{r,c_0})$ be the class
of all Borel measures $\rho$ on $Z$ such that $f_\rho\in\mathcal
F^{r,c_0}$. We enter into a competition over all estimators
$\mathcal A_m:{\bf z}\rightarrow f_{\bf z}$ and define
$$
          e_m(\mathcal F^{r,c_0}):=\inf_{\mathcal A_m}\sup_{\rho\in \mathcal
          M(\mathcal F^{r,c_0})}E_{\rho^m}(\|f_\rho-f_{\bf z}\|^2_\rho).
$$
It is easy to see that $e_m(\mathcal F^{r,c_0})$ quantitively
measures the quality of $f_{\bf z}$. Then it can be found in
\cite[Chapter 3]{Gyorfy} or \cite{Devore2006} that
\begin{equation}\label{baseline}
            e_m(\mathcal F^{r,c_0})\geq Cm^{-\frac{2r}{2r+d}},\ m=1,2,\dots,
\end{equation}
where  $C$ is a constant depending only on $M$, $d$, $c_0$ and $r$.

Modulo the arbitrary small positive number $\varepsilon$, the
established learning rate (\ref{th1}) is asymptotically optimal in a
minmax sense. If we notice the identity:
$$
             \mathbf E_{\rho^m} (\mathcal E(f_\rho)-\mathcal E(f_{{\bf
           z},\lambda,q}))=\int_0^\infty \mathbf P_{\rho^m}\{\mathcal E(f_\rho)-\mathcal E(f_{{\bf
           z},\lambda,q})>\varepsilon\}d\varepsilon.
$$
then there holds
\begin{equation}\label{Almost optimal rate}
          C_1 m^{-\frac{2r}{2r+d}}
         \leq
         e_m(\mathcal F^{r,c_0})
         \leq
         \sup_{f_\rho\in \mathcal F^{r,c_0}}\mathbf E_{\rho^m}\left\{\mathcal E(\pi_Mf_{{\bf z},\lambda,q})-\mathcal
           E(f_\rho)\right\}\leq
          C_2m^{-\frac{2r}{2r+d}+\varepsilon},
\end{equation}
where $C_1$ and $C_2$ are constants depending only on $r$, $c_0$,
$M$ and $d$.

Due to (\ref{Almost optimal rate}), we know that the learning
strategy (\ref{algorihtm1}) is almost the optimal method  if  the
smoothness information of $f_\rho$ is known.  It should be
highlighted that the above optimality is given in the background of
the worst case analysis. That is, for a concrete $f_\rho$, the
learning rate of the strategy (\ref{algorihtm1}) may be much faster
than $m^{-\frac{2r}{2r+d}}$. For example, if the concrete $f_\rho\in
\mathcal F^{2r,c_0}\subset \mathcal F^{r,c_0}$, then the learning
rate of (\ref{algorihtm1}) can achieve to
$m^{-\frac{4r}{4r+d}+\varepsilon}$. Summarily, the conception of
optimal learning rate is based on $\mathcal F^{r,c_0}$ rather than a
fixed regression functions.

\subsubsection{Choice of the width}

 The width of Gaussian kernel determines both approximation
capability and complexity of the corresponding RKHS, and thus plays
a crucial role in the learning process. Admittedly, as a function of
$\sigma$, the complexity of the Gaussian RKHS is monotonically
decreasing. Thus, due to the so-called bias and variance problem in
learning theory \cite{Cucer2007}, there exists an optimal choice of
$\sigma$ for the Gaussian kernel method. Since SDHS is essentially
an $m$-dimensional linear space and the Gaussian RKHS is an infinite
space for arbitrary  $\sigma$ (kernel width) \cite{Minh2010}, the
complexity of the Gaussian SDHS  may be smaller than the  Gaussian
RKHS at the first glance. Hence, there naturally arises the
following question: does the optimal  $\sigma$ of the Gaussian SDHS
learning coincide with that of the Gaussian RKHS learning? Theorem
\ref{main result} together with \cite[Corollary 3.2]{Eberts2011}
demonstrate that the optimal widths of the above two strategies are
asymptomatically identical. That is, if   the smooth information of
the regression function is known, then the optimal choices of
$\sigma$ of both learning strategies (\ref{algorihtm1}) and
(\ref{RLS}) are the same. The above phenomenon can be explained as
follows. Let $B_{H_\sigma}$ be the unit ball of the Gaussian RKHS
and $B_2:=\left\{f\in G_{\sigma,{\bf z}}:
\frac1n\sum_{i=1}^n|f(x_i)|^2\leq 1\right\}$ be the $l^2$ empirical
ball. Denote by $\mathcal N_2(B_{H_\sigma},\varepsilon)$ the
$l_2$-empirical covering number \cite{Shi2011}, whose definition can
be found in the descriptions above Lemma \ref{COVERINGNUMBER} in the
present paper. Then it can be found in \cite[Theorem
2.1]{Steinwart2007} that  for any $\varepsilon>0$, there holds
\begin{equation}\label{wide description 0}
          \log \mathcal N_2(B_{H_\sigma},\varepsilon)\leq
          C_{p,\mu,d}\sigma^{(p/2-1)(1+\mu)d}\varepsilon^{-p},
\end{equation}
where $p$ is an arbitrary real number in $(0,2]$ and  $\mu$ is an
arbitrary positive number.  For the Gaussian SDHS, $\mathcal
G_{\sigma,{\bf z}}$, on one hand, we can use the fact that $\mathcal
G_{\sigma,{\bf z}}\subset H_\sigma$ and deduce
\begin{equation}\label{width description 1}
          \log \mathcal N_2(B_{2},\varepsilon)\leq
          C'_{p,\mu,d}\sigma^{(p/2-1)(1+\mu)d}\varepsilon^{-p},
\end{equation}
where $C'_{p,\mu,d}$ is a constant depending only on $p,\mu$ and
$d$.
  On the other hand, it follows from \cite[Lemma
9.3]{Gyorfy} that
\begin{equation}\label{width decription 2}
          \log \mathcal N_2(B_{2},\varepsilon)\leq
          C_dm\log\frac{4+\varepsilon}{\varepsilon}
\end{equation}
where the finite-dimensional property of $\mathcal G_{\sigma,{\bf
z}}$ is used. Therefore, it should be highlighted that the
finite-dimensional property of $\mathcal G_{\sigma,{\bf z}}$ is used
if
$$
           C_dm\log\frac{4+\varepsilon}{\varepsilon}\leq
             C'_{p,\mu,d}\sigma^{(p/2-1)(1+\mu)d}\varepsilon^{-p},
$$
which always  implies  that $\sigma $ is very small (may be smaller
than $\frac1m$).

However, to deduce a good approximation capability of $\mathcal
G_{\sigma,{\bf z}}$, it can be deduced from \cite{Lin2013c} that
$\sigma$  can not be very small. Thus, we   use (\ref{width
description 1}) rather than (\ref{width decription 2}) to describe
the complexity of $\mathcal G_{\sigma,{\bf z}}$. Noting (\ref{wide
description 0}),
%for a not very small  $\sigma$ (corresponding to $1/m$),
when $\sigma$ is not very small (corresponding to $1/m$), the
complexity of $\mathcal G_{\sigma,{\bf z}}$ asymptomatically equals
to that of $H_\sigma$. Under this circumstance, recalling that the
 optimal widths of the learning strategies (\ref{RLS}) and
 (\ref{algorihtm1}) may not be very small,
 the capacities of $\mathcal
G_{\sigma,{\bf z}}$ and $H_\sigma$ are asymptomatically identical.
Therefore, the optimal choice  of $\sigma$ in  (\ref{algorihtm1})
are the same as that in (\ref{RLS}).

\subsubsection{Importance of the regularization term}

We can address  the regularized learning model as a collection of
empirical minimization problems. Indeed, let $\mathcal B$ be the
unit ball of a space related to the regularization term and consider
the empirical minimization problem in $r\mathcal B$ for some $r>0$.
As $r$ increases, the approximation error for $r\mathcal B$
decreases and its sample error increases. We can achieve a small
total error by choosing the correct value of $r$ and performing
empirical minimization in $r\mathcal B$ such that the approximation
error and sample error are asymptomatically  identical. The role of
regularization term is to force the algorithm to choose the correct
value of $r$ for empirical minimization \cite{Mendelson2008} and
then provides a method of solving the bias-variance problem.
Therefore, the main role of   the regularization term is to control
the capacity of the hypothesis space.

Compared with the regularized least squares strategy (\ref{RLS}), a
consensus is that  $l^q$ coefficient regularization schemes
(\ref{algorihtm1}) may bring a certain additional interest such as
the  sparsity for suitable choice of $q$ \cite{Shi2011}. However, it
should be noticed that this assertion  may not always be true.

There are usually two criteria to choose the regularization
parameter in such a setting:
\begin{enumerate}
\item[(a)]
the approximation error should be as small as possible;

\item[(b)]
the sample error should be as small as possible.
\end{enumerate}
Under the criterion (a), $\lambda$ should not be too large, while
under the criterion (b), $\lambda$ can not be too small. As a
consequence, there is an uncertainty principle in the choice of the
optimal $\lambda$ for generalization. Moreover, if the sparsity of
the estimator is needed, another criterion should be also taken into
consideration, that is,
\begin{enumerate}
\item[(c)]
The sparsity of the estimator should be as sparse as possible.
\end{enumerate}
The sparsity criterion (c) requires that $\lambda$ should be large
enough, since the sparsity of the estimator monotonously decreases
with respect to $\lambda$. It should be pointed out that the optimal
$\lambda_0$ for generalization may be smaller than the smallest
value of $\lambda$ to guarantee  the sparsity. Therefore, to obtain
the sparse estimator, the generalization capability may degrade in
certain a sense. Summarily, $l^q$ coefficient regularization scheme
may brings a certain additional attribution of the estimator without
sacrificing the generalization  capability but not always so. It may
depend on the distribution $\rho$, the choice of $q$ and the
samples. In a word, the $l^q$ coefficient  regularization scheme
(\ref{algorihtm1}) provides a possibility to bring other advantages
without degrading the generalization capability. Therefore, it may
outperform  the classical kernel methods in certain a sense.

\subsubsection{$q$ and learning rate}

Generally speaking,  the generalization capability of $l^q$
regularization scheme (\ref{algorihtm1}) may depend on the width of
Gaussian kernel, the regularization parameter $\lambda$, the
behavior of priors, the size of samples $m$, and, obviously, the
choice of $q$. While from Theorem \ref{main result} and (\ref{Almost
optimal rate}), it has been demonstrated that the learning schemes
defined by (\ref{algorihtm1}) can indeed achieve the asymptotically
optimal rates for all choices of $q$. In other words, the choice of
$q$ has no influence on the learning rate, which in turn means that
$q$ should be chosen according to other non-generalization
considerations such as the smoothness, sparsity, and computational
complexity.

This assertion is not surprising if we cast $l^q$ regularization
schemes (\ref{algorihtm1}) into the process of empirical
minimization. From the above analysis, it is known that the width of
Gaussian kernel depicts the complexity of the $l^q$ empirical unit
ball and the regularization parameter describes  the choice of the
radius of the $l^q$ ball. It should be also pointed out that the
choice of $q$ implies the  route of the change in order to find the
hypothesis space with the appropriate capacity. A regularization
scheme can be regarded as the following process according to the
bias and variance problem. One first chooses a large hypothesis
space to guarantee the small approximation error, and then shrinks
the capacity of the hypothesis space until the sample error and
approximation error being asymptomatically identical. It can be
found in Fig.1 that
 $l_q$ regularization  schemes with different $q$ may possess
 different paths of shrinking, and then derive estimators with
 different attributions.
\begin{figure}
\begin{center}
\includegraphics[height=4cm,width=4.0cm]{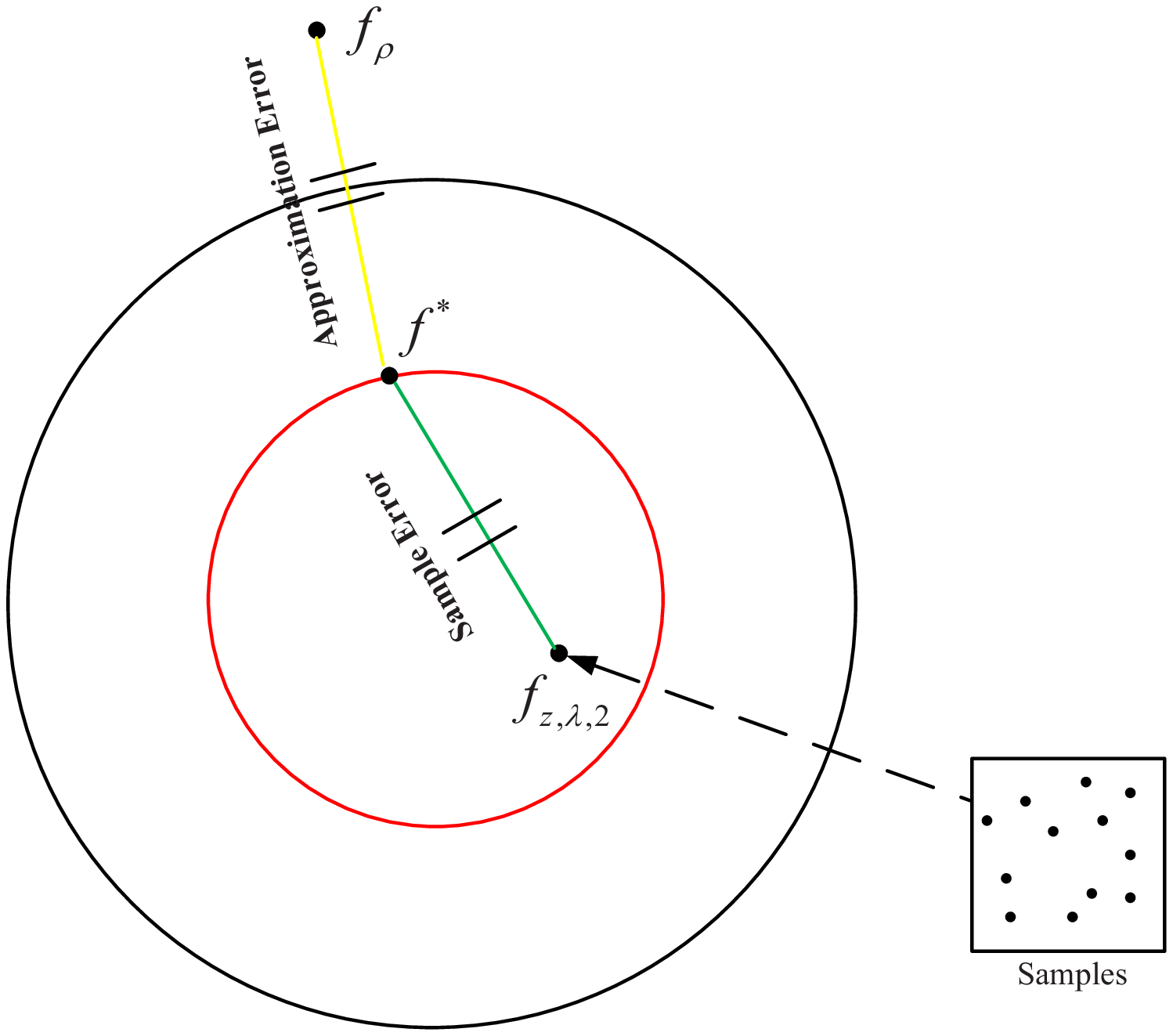}
\includegraphics[height=4cm,width=4.0cm]{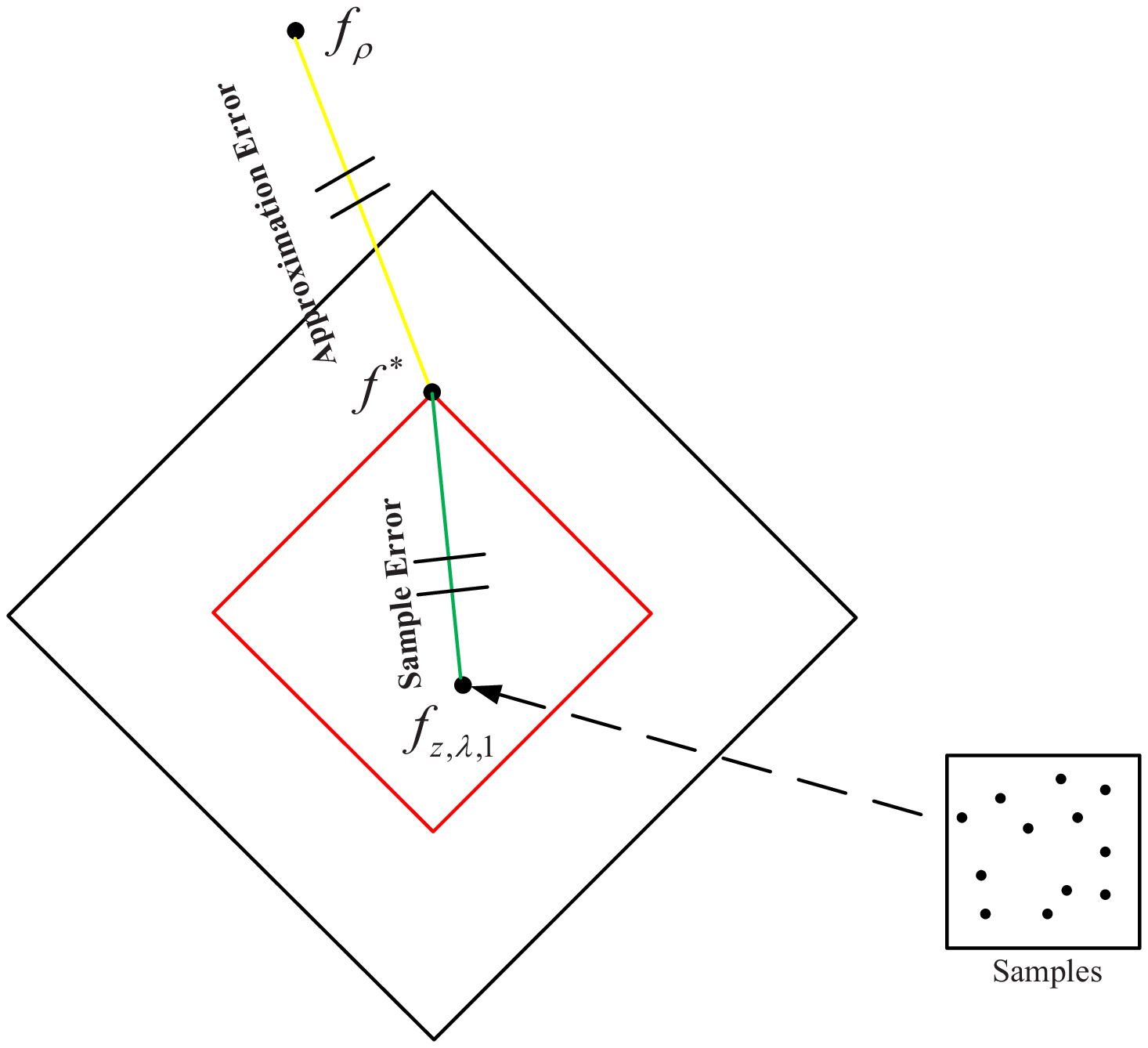}
\includegraphics[height=4cm,width=4.0cm]{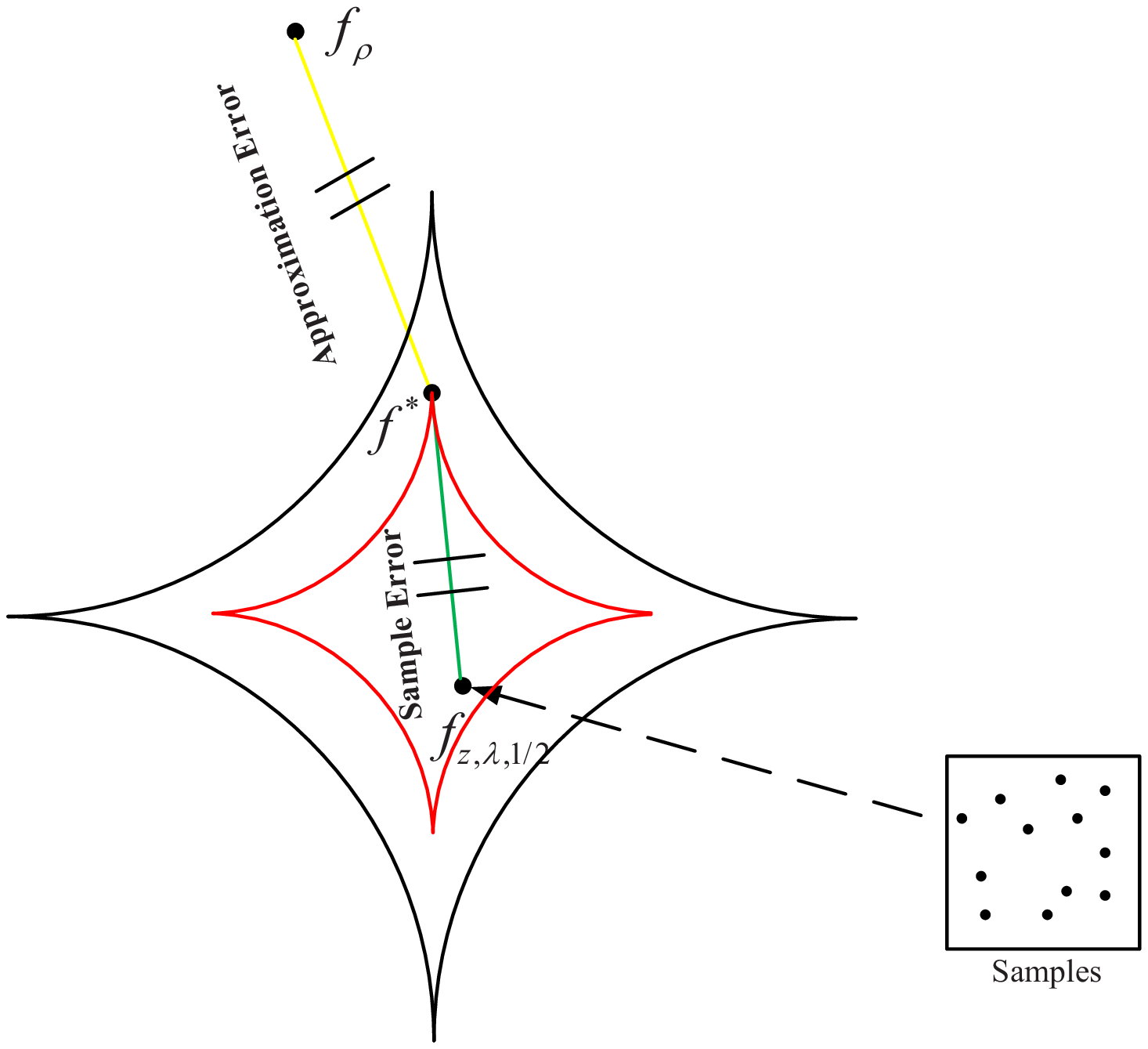}
\caption{The above three figures show the routes of the change of
$l^2$, $l^1$ and $l^{1/2}$ regularizers, respectively.}
\end{center}
\end{figure}
From Fig.1, it also shows that, by appropriately tuning the
regularization (the radius of the $l^q$ empirical
 ball), we can always obtain $l^q$ regularizer estimators for all
 $0<q<\infty$ with the similar learning rates. In such a
 sense, it can be concluded that the learning rate of $l^q$ regularization learning is independent of  the choice of
 $q$.

\subsection{Comparisons}
In this subsection, we give many comparisons between Theorem
\ref{main result} and the related work to show the novelty of our
result. We divide the comparisons into the following three
categories. At first, we illustrate the difference between learning
in RKHS and SDHS associated with Gaussian kernel. Then we compare
our result with the existing results on coefficient-based
regularization in SDHS. Finally, we refer certain papers concerning
the choice of regularization exponent $q$ and show the novelty of
our result.

\subsubsection{ Learning in RKHS and SDHS with Gaussian kernel}
Kernel methods with Gaussian kernels are one of the classes of the
standard and state-of-the-art learning strategies. Therefore, the
corresponding properties such as the covering numbers, RKHS norms,
formats of the elements in the RKHS, associated with Gaussian
kernels were studied in
\cite{Steinwart2008,Minh2010,Zhou2002,Steinwart2006}. Based on these
analyses, the learning capabilities of Gaussian kernel learning were
thoroughly revealed in
\cite{Eberts2011,Ye2008,Hu2011,Steinwart2007,Xiang2009} and
references therein. For classification,  \cite{Steinwart2007} showed
that the learning rates for support vector machines with hinge loss
and Gaussian kernel can attain   the order of $m^{-1}$. For
regression, it was shown in \cite{Eberts2011} that the regularized
least squares algorithm with Gaussian kernel can achieve the almost
optimal learning rate if the smoothness information of the
regression function is given.

However, the learning capability of the coefficient-based
regularization scheme (\ref{algorihtm1}) remains open. It should be
stressed that the roles of regularization terms in
(\ref{algorihtm1}) and (\ref{RLS}) are distinct even though the
solutions to these two schemes are identical for $q=2$. More
specifically, without the regularization term, there are infinite
many solutions to the least squares problem in the Gaussian RKHS. In
order to obtain an expected and unique solution, we should impose a
certain structure upon the solution, which can be achieved via
introducing a specified regularization term. Therefore, the
regularized least squares algorithm (\ref{RLS}) can be regarded as a
structural risk minimization strategy since it chooses a solution
with the simplest  structure among the infinite many solutions.
 However, due to the positive definiteness of the Gaussian kernel,
there is   a unique solution to (\ref{algorihtm1}) with $\lambda=0$
and the role of regularization can be regarded to improve the
generalization capability only. Summarily, the introduction of
regularization in (\ref{RLS}) can be regarded as a passive choice,
while that in (\ref{algorihtm1}) is an active operation.

The above difference requires different technique to analyze the
performance of strategy (\ref{algorihtm1}). Indeed, the  most widely
used method was proposed in \cite{Wu2008}. Based on \cite{Wu2005},
\cite{Wu2008} pointed out that the generalization error can be
divided into three terms: approximation error, sample error and
hypothesis space. Basically, the generalization error can be bounded
via the following three steps:
\begin{enumerate}
\item[(S1)]
Find an alternative estimator outside the SDHS to approximate the
regression function;

\item[(S2)]
Find an approximation of the alternative function in SDHS and deduce
the hypothesis error;

\item[(S3)]
Bound the sample error which describes the distance between the
approximant in SDHS and the $l_q$ regularizer.
\end{enumerate}
In this paper, we also employ this technique to analyze the
performance of the learning strategy (\ref{algorihtm1}). Our result
shows that, similar to the regularized least squares algorithm
\cite{Eberts2011}, $l^q$ coefficient-based regularization scheme
(\ref{algorihtm1})  can also achieve the almost optimal learning
rate if  the smoothness information of the regression function is
given.

\subsubsection{$l^q$ regularizer with fixed $q$}

 There have been several papers  that focus on the generalization
capability  analysis of the $l^q$ regularization scheme
(\ref{algorihtm}). \cite{Wu2008} was the first paper, to the best of
our knowledge, to show a mathematical foundation of learning
algorithms in SDHS. They claimed that the data dependent nature of
the algorithm leads  to an extra   hypothesis error, which is
essentially different form regularization schemes with sample
independent hypothesis spaces (SIHSs). Based on this, the authors
proposed a coefficient-based regularization strategy and conducted a
theoretical analysis of the strategy by dividing the generalization
error into approximation error, sample error and hypothesis error.
Following their work,  \cite{Xiao2010} derived a learning rate of
$l^1$ regularizer via bounding the regularization error, sample
error and hypothesis error, respectively. Their result was improved
in \cite{Shi2011} by adopting  a concentration   technique with
$l^2$ empirical covering numbers to tackle the sample error. On the
other hand, for $l^q$ $(1\leq q\leq 2)$ regularizers,
\cite{Tong2010} deduced an upper bound for the generalization error
by using a different method to cope with the hypothesis error.
 Later, the learning rate of  \cite{Tong2010} was
improved further in  \cite{Feng2011} by giving a sharper estimation
of  the sample error.

In all those researches,  both spectrum assumption of the regression
function $f_\rho$ and  the concentration property of $\rho_X$ should
be satisfied. Noting this,  for  $l^2$ regularizer,
  \cite{Sun2011} conducted a generalization capability
analysis for $l^2$ regularizer by  using the spectrum assumption to
the regression function only. For $l^1$ regularizer, by using a
sophisticated functional analysis method,  \cite{Zhang2009} and
  \cite{Song2011} built the regularized least squares
algorithm on the reproducing kernel Banach space (RKBS), and  proved
that the regularized least squares algorithm in RKBS is equivalent
to $l^1$ regularizer if the kernel satisfies some restricted
conditions. Following this method,  \cite{Song2011b} deduced a
similar learning rate for the $l^1$ regularizer and eliminated the
concentration   assumption on the marginal distribution .

To intrinsically characterize the generalization capability of a
learning strategy, the essential generalization bound rather than
the upper bound is desired, that is, we must deduce both the lower
and   upper bounds   for the learning strategy and prove that the
upper and lower bounds can be asymptotically identical. Under this
circumstance, we can essentially deduce the learning capability of
the learning scheme. All of the above results for $l^q$ regularizers
with fixed $q$ were only concerned with the upper bound. Thus, it is
generally difficult to reveal their essential learning capabilities.
Nevertheless, as shown by Theorem \ref{main result}, our established
learning rate is essential. It can be found in (\ref{Almost optimal
rate}) that if $f_\rho\in \mathcal F^{r,c_0}$, then the deduced
learning rate cannot be improved.

\subsubsection{The choice of $q$}

\cite{Blanchard2008} is the first paper, to the best of our
knowledge,  that focuses on the choice of the optimal $q$ for the
kernel method. Indeed, as far as the sample error is concerned,
\cite{Blanchard2008} pointed out that there is an optimal exponent
$q\neq 2$ for support vector machine with hinge loss. Then,
\cite{Mendelson2008} found that this assertion also held for the
regularized least square strategy (\ref{old q}). That is, as far as
the sample error is concerned, regularized least squares may have a
design flaw. However, in \cite{Steinwart2009}, Steinwart et al.
derived a $q$-independent optimal learning rate of (\ref{old q}) in
a minmax sense. Therefore, they concluded that the RLS algorithm
(\ref{RLS}) had no advantages or disadvantages compared with other
values of $q$ in (\ref{old q}) from the statistical point of view.

Since  $l^q$ coefficient regularization strategy (\ref{algorihtm})
is solvable for arbitrary $0<q<\infty$, and different $q$ may derive
different attributions of the estimator, studying the dependence
between learning performance of learning strategy (\ref{algorihtm})
and $q$ is more interesting. This topic was first studied in
\cite{Lin2013}, where we have shown that there is a positive
definite kernel such that  the learning rate of the corresponding
$l^q$ regularizer is independent of  $q$. However,  the kernel
constructed in \cite{Lin2013} can not be easily formulated in
practice.  Thus, we turn to study the dependency of the
generalization capabilities and $q$ of $l^q$ regularization learning
with the widely used Gaussian kernel. Fortunately, we find that the
similar conclusion also holds for the Gaussian kernel, which is
witnessed in
 Theorem \ref{main result} in this paper.

\section{Proof of Theorem \ref{main result}.}

\subsection{Error decomposition}

For an arbitrary $u=(u_1,\dots,u_d)\in \mathbf I^d$, define
$F^{(0)}_\rho(u)=f_\rho(u).$  To construct a function $F_\rho^{(1)}$
defined on $[-1,1]^d$, we can define
$$
      F_\rho^{(1)}(u_1,\dots,u_{j-1},-u_j,u_{j+1},\dots,u_d)=F_\rho^{(0)}(u_1,\dots,u_{j-1},u_j,u_{j+1},\dots,u_d)
$$
for arbitrary $j=1,2,\dots,d$. Finally, for every $j=1,\dots,d$, we
define
$$
            F_\rho(u_1,u_{j-1},u_j+2,u_{j+1},\dots,u_d)=F_\rho^{(1)}(u_1,\dots,u_{j-1},u_j,u_{j+1},\dots,u_d).
$$
Therefore, we have constructed a function $F_\rho$ defined on
$\mathbf R^d$. From the definition, it follows that $F_\rho$ is an
even, continuous and periodic function with respect to arbitrary
variable $u_i, i=1,\dots,d$.

In order to give an error decomposition  strategy for $\mathcal
E(\pi_Mf_{{\bf z},\lambda,q})-\mathcal E(f_\rho)$, we should
construct a function $f_0\in H_K$ as follows. Define
\begin{equation}\label{approximant}
            f_0(x):=K*F_\rho:=\int_{\mathbf R^d}K(x-u)F_\rho(u)du,
           \  x\in \mathbf I^d,
\end{equation}
where
$$
       K(x):=\sum_{j=1}^r\left(^r_j\right)(-1)^{1-j}\frac1{j^d}\left(\frac2{\sigma^2\pi}\right)^{d/2}
       G_{\frac{j\sigma}{\sqrt{2}}}(x),
$$

Denote by $\mathcal H_\sigma$ and $\|\cdot\|_\sigma$  the   RKHS
associated with $G_\sigma$ and its  corresponding RKHS norm,
respectively. To prove Theorem \ref{main result},  the following
error decomposition strategy is required.

\begin{proposition}\label{ERROR DECOMPOSITION}
Let $f_{{\bf z},\lambda,q}$  and $f_0$ be defined as in
(\ref{algorihtm1}) and (\ref{approximant}), respectively. Then we
have
\begin{eqnarray*}
         \mathcal E(\pi_Mf_{{\bf z},\lambda,q})-\mathcal E(f_\rho)
         &\leq&
         \mathcal E(f_0)-\mathcal E(f_\rho)+\frac1m\|f_0\|_\sigma^2\\
         &+&
         \left(\mathcal E_{\bf z}(\pi_Mf_{{\bf z},\lambda,q})+\lambda\sum_{i=1}^m|a_i|^q\right)
         -\left(\mathcal E_{\bf z}(f_0)+\frac1m\|f_0\|_\sigma^2\right)\\
         &+&
         \mathcal
        E_{\bf z}(f_0)-\mathcal E(f_0)+\mathcal E(\pi_Mf_{{\bf z},\lambda,q})-\mathcal
        E_{\bf z}(\pi_Mf_{{\bf z},\lambda,q}),
\end{eqnarray*}
where $\mathcal E_{\bf
               z}(f)=\frac1m\sum_{i=1}^m(y_i-f(x_i))^2$.
\end{proposition}

Upon making the short hand notations
$$
             \mathcal D(m):=\mathcal E(f_0)-\mathcal E(f_\rho)+\frac1m\|f_0\|_\sigma^2,
$$
 $$
          \mathcal S(m,\lambda,q):=\mathcal
        E_{\bf z}(f_0)-\mathcal E(f_0)+\mathcal E(\pi_Mf_{{\bf z},\lambda,q})-\mathcal
        E_{\bf z}(\pi_Mf_{{\bf z},\lambda,q}),
$$
and
$$
            \mathcal P(m,\lambda,q):=
            \left(\mathcal E_{\bf z}(\pi_Mf_{{\bf z},\lambda,q})+\lambda\sum_{i=1}^m|a_i|^q\right)
            -\left(\mathcal E_{\bf z}(f_0)+\frac1m\|f_0\|_\sigma^2\right)
$$
for the approximation error,   sample error and hypothesis error,
respectively, then we have
\begin{equation}\label{error decomposition}
            \mathcal E(\pi_Mf_{{\bf z},\lambda,q})-\mathcal E(f_\rho)\leq \mathcal
            D(m)+ \mathcal S(m,\lambda,q)+\mathcal P(m,\lambda,q).
\end{equation}

\subsection{Approximation error estimation}

Let $A\subseteq\mathbf R^d$. Denote by $C(A)$ the space of
continuous functions defined on $A$ endowed with norm $\|\cdot\|_A$.
Denote by
$$
                  \omega_{r}(f,t,A) = \sup_{\|{\bf h}\|_2\leq
                  t}\|\Delta_{{\bf h},A}^r(f,\cdot)\|_A
$$
the $r$-th modulus of smoothness \cite{DeVore1993}, where the $r$-th
difference $\Delta_{{\bf h},A}(f,\cdot)$ is defined by
$$
       \Delta_{{\bf h},A}^r(f,{\bf x})=\left\{
       \begin{array}{cc}
       \sum_{j=0}^r
       \left(\begin{array}{c}
         r\\
         j
         \end{array}\right)(-1)^{r-j}f({\bf x}+j{\bf h}) &
         \mbox{if}\ {\bf x}\in A_{r,{\bf h}}\\
         0 &
         \mbox{if}\ x\notin A_{r,{\bf h}}
         \end{array}\right.
$$
for ${\bf h}=(h_1,\dots,h_d)\in \mathbf R^d$ and $A_{r,{\bf
h}}:=\{{\bf x}\in A:{\bf x}+s{\bf h}\in A,\ \mbox{for all}\
s\in[0,r]\}$. It is well known \cite{DeVore1993} that
\begin{equation}\label{modulsproperty}
              \omega_{r}(f,t,A)\leq
              \left(1+\frac{t}{s}\right)^r\omega_{r}(f,s,A).
\end{equation}

To bound the approximation error, the following three lemmas are
required.

\begin{lemma}\label{extension}
Let $r>0$. If $f_\rho\in C(\mathbf I^d)$, then  $F_\rho\in
C(\mathbf{R}^d)$  satisfies

i) $
                               F_\rho(x)=f_\rho(x),\ x\in \mathbf
                               I^d.
$

ii) $
                    \|F_\rho\|_{\mathbf R^d}=\|f_\rho\|_{\mathbf
                    I^d}.
$

iii) $
                       \omega_r(F_\rho,t,\mathbf R^d)\leq \omega_{r}(f_\rho,t,\mathbf I^d).
$
\end{lemma}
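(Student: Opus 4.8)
The plan is to realize $F_\rho$ as the pullback of $f_\rho$ under a coordinate-wise folding map. Define $\phi:\mathbf R\to[0,1]$ to be the $2$-periodic triangle-wave that equals the identity on $[0,1]$ and satisfies $\phi(-s)=\phi(s)$, and set $\Phi(u)=(\phi(u_1),\dots,\phi(u_d))$, so that $\Phi(\mathbf R^d)=\mathbf I^d$ and, by the very definition of the even reflection and the $2$-periodic extension, $F_\rho=f_\rho\circ\Phi$ on all of $\mathbf R^d$. Parts i) and ii) then fall out immediately: since $\phi$ fixes $[0,1]$ pointwise, $\Phi$ is the identity on $\mathbf I^d$ and hence $F_\rho=f_\rho$ there, which is i); and since every $u\in\mathbf R^d$ satisfies $F_\rho(u)=f_\rho(\Phi(u))$ with $\Phi(u)\in\mathbf I^d$, the ranges of $F_\rho$ and $f_\rho$ coincide, giving $\|F_\rho\|_{\mathbf R^d}=\|f_\rho\|_{\mathbf I^d}$, which is ii). Continuity of $F_\rho$ (needed for $F_\rho\in C(\mathbf R^d)$) follows from continuity of $f_\rho$ and of $\phi$, the evenness guaranteeing that the one-sided values match across each integer hyperplane.

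For iii) I would estimate $\Delta^r_{\mathbf h,\mathbf R^d}(F_\rho,\mathbf x)$ pointwise for a fixed step $\mathbf h$ with $\|\mathbf h\|_2\le t$ and show it is dominated by $\omega_r(f_\rho,t,\mathbf I^d)$; taking suprema over $\mathbf x$ and over $\mathbf h$ then yields the claim. The clean case is when the segment $\{\mathbf x+s\mathbf h:s\in[0,r]\}$ meets none of the folding hyperplanes $\{u_i\in\mathbf Z\}$. On such a segment each component $\phi(u_i)$ is affine with slope $\pm1$, so $\Phi$ restricts to an affine isometry carrying the $r+1$ equally spaced nodes $\mathbf x+j\mathbf h$ to equally spaced nodes $\mathbf y+j\mathbf h'$ lying in $\mathbf I^d$ with $\|\mathbf h'\|_2=\|\mathbf h\|_2\le t$. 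Consequently $\Delta^r_{\mathbf h,\mathbf R^d}(F_\rho,\mathbf x)=\Delta^r_{\mathbf h',\mathbf I^d}(f_\rho,\mathbf y)$, whose absolute value is bounded by $\omega_r(f_\rho,t,\mathbf I^d)$ by definition.

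The main obstacle is the remaining configurations, in which the segment crosses one or more folding hyperplanes: there $\Phi$ is only piecewise affine, the images $\Phi(\mathbf x+j\mathbf h)$ are no longer equally spaced, and the alternating sum is not literally an $r$-th difference of $f_\rho$. My plan is to attempt a reduction coordinate-by-coordinate so as to isolate the one-dimensional folding picture, and then to use the reflection symmetry $\phi(-s)=\phi(s)$ to re-express the broken alternating sum across a single fold as a genuine difference of $f_\rho$ over an isometric configuration inside $[0,1]$, iterating this reduction for several folds and passing to the limit from segments that avoid the (lower-dimensional) crossing set by continuity of $F_\rho$. I expect this symmetry-based rewriting across folds to be the delicate heart of the argument, and the step most likely to require the finer structure of the modulus of smoothness together with its monotonicity property~(\ref{modulsproperty}).
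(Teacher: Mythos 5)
Your folding map $\Phi$ with $F_\rho=f_\rho\circ\Phi$ is exactly the mechanism behind the paper's own argument (the paper phrases it as: choose ${\bf k}_{j,{\bf h}}$ with $v+j{\bf h}-2{\bf k}_{j,{\bf h}}\in[-1,1]^d$, then use periodicity and evenness to replace $F_\rho(v+j{\bf h})$ by $f_\rho(|v+j{\bf h}-2{\bf k}_{j,{\bf h}}|)$), and your treatment of i), ii) and of the ``clean'' case of iii) is correct. But the proposal is not a proof: the fold-crossing case, which you yourself flag as the delicate heart, is left as a plan, and that plan cannot be carried out. Two concrete obstructions. First, crossing configurations are not a negligible limit of clean ones: for a fixed step ${\bf h}$, the set of base points $x$ whose segment $\{x+s{\bf h}:s\in[0,r]\}$ meets a folding hyperplane $\{u_i\in\mathbf Z\}$ has positive measure (in one dimension with $h=0.1$, $r=2$ it is the union of the intervals $[n-0.2,\,n]$, $n\in\mathbf Z$), so your parenthetical assumption that the crossing set is lower-dimensional is wrong and no continuity/density argument reduces to the clean case. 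Second, and fatally, no symmetry rewriting across a fold can turn the alternating sum into a genuine $r$-th difference of $f_\rho$, because the asserted inequality iii) is simply false for $r\geq 2$ under the paper's definition of $\omega_r$: take $d=1$ and $f_\rho(x)=x$, so that $\omega_2(f_\rho,t,\mathbf I)=0$ (second differences of an affine function vanish identically), while $F_\rho$ is the triangle wave, for which $\Delta^2_{h,\mathbf R}(F_\rho,-h)=F_\rho(-h)-2F_\rho(0)+F_\rho(h)=2h$, hence $\omega_2(F_\rho,t,\mathbf R)\geq 2t>0$. The kink that even reflection creates at the integers destroys all smoothness of order higher than one.

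You should also know that the paper's published proof jumps over precisely the same point: after rewriting the difference as an alternating sum of values $f_\rho(|v-2{\bf k}_{j,{\bf h}}+j{\bf h}|)$, it concludes ``hence, by the definition of the modulus of smoothness,'' which is legitimate only when the folded nodes are again equally spaced inside $\mathbf I^d$, i.e., exactly in your clean case. So your analysis correctly isolates the unjustified step --- that is genuine insight --- but neither your plan nor the paper closes the gap, and with this construction of $F_\rho$ it cannot be closed. A correct route would have to replace plain even reflection by an extension operator that preserves higher-order smoothness (for instance a Hestenes-type reflection, where $F_\rho(-x)$ is a suitable finite linear combination of values $f_\rho(\beta_k x)$ with coefficients chosen to match derivatives at the boundary), i.e., one must change the statement's construction of $F_\rho$, not merely the proof of iii).
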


\begin{proof}
 Based on the definition of $F_\rho$, it suffices to prove
the third assertion. To this end, for an arbitrary
$v=(v_1,\dots,v_d)\in\mathbf R^d$, noting that the period of
$F_\rho$ with respect to  each variable is $2$, there exists a ${\bf
k}_{j,{\bf h}}$ such that $v+j{\bf h}-2{\bf k}_{j,{\bf
h}}\in[-1,1]^d.$ That is,
$$
       \Delta_{{\bf h},\mathbf R^d}^r(F_\rho,v)=
       \sum_{j=0}^r
       \left(\begin{array}{c}
         r\\
         j
         \end{array}\right)(-1)^{r-j}F_\rho(v+j{\bf h})
         =
         \sum_{j=0}^r
       \left(\begin{array}{c}
         r\\
         j
         \end{array}\right)(-1)^{r-j}F_\rho(v-2{\bf k}_{j,{\bf h}}+j{\bf h})
$$
Since $F_\rho$ is even, we   can deduce
\begin{eqnarray*}
        \Delta_{{\bf h},\mathbf R^d}^r(F_\rho,v)
        &=&
         \sum_{j=0}^r
       \left(\begin{array}{c}
         r\\
         j
         \end{array}\right)(-1)^{r-j}F_\rho(|v-2{\bf k}_{j,{\bf h}}+j{\bf
         h}|)\\
         &=&
         \sum_{j=0}^r
       \left(\begin{array}{c}
         r\\
         j
         \end{array}\right)(-1)^{r-j}f_\rho(|v-2{\bf k}_{j,{\bf h}}+j{\bf h}|)
\end{eqnarray*}
Hence, by the definition of the modulus of smoothness, we have
$$
            \omega_r(F_\rho,t,\mathbf R^d)\leq \omega_{r}(f_\rho,t,\mathbf
            I^d),
$$
which finishes the proof of Lemma \ref{extension}.
\end{proof}

\begin{lemma}\label{Jackson}
 Let $r>0$ and $f_0$ be defined as in
 (\ref{approximant}). If  $f_\rho\in C(\mathbf I^d)$, then
$$
           \|f_\rho-f_0\|_{\mathbf I^d}\leq
           C\omega_r(f_\rho,\sigma,\mathbf I^d),
$$
where $C$ is a constant depending only on $d$ and $r$.
\end{lemma}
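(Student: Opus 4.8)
The plan is to treat $f_0=K*F_\rho$ as a quasi-interpolant and to exhibit the approximation error $f_0-F_\rho$ as a Gaussian-weighted average of $r$-th order differences of $F_\rho$, so that the modulus of smoothness enters directly. The starting observation is that each summand of $K$ is a normalized Gaussian: a direct computation gives $\frac1{j^d}\left(\frac2{\sigma^2\pi}\right)^{d/2}\int_{\mathbf R^d}G_{j\sigma/\sqrt2}(s)\,ds=1$, and since $\sum_{j=1}^r\left(^r_j\right)(-1)^{1-j}=1$ the kernel itself has unit mass, $\int_{\mathbf R^d}K=1$. This unit-mass property is what will allow the constant term to cancel at the end.

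Next I would write $f_0(x)=\int_{\mathbf R^d}K(x-u)F_\rho(u)\,du$ and recognize each term as a convolution $(g_{j\sigma/\sqrt2}*F_\rho)(x)$ against the normalized Gaussian $g_{j\sigma/\sqrt2}$. Using the dilation relation $g_{j\sigma/\sqrt2}(s)=j^{-d}\phi(s/j)$, where $\phi$ denotes the single normalized Gaussian of width $\sigma/\sqrt2$ (so $\int\phi=1$, independent of $j$), the substitution $s=jt$ gives $(g_{j\sigma/\sqrt2}*F_\rho)(x)=\int_{\mathbf R^d}\phi(t)F_\rho(x-jt)\,dt$. Interchanging the finite sum with the integral then yields $f_0(x)=\int_{\mathbf R^d}\phi(t)\sum_{j=1}^r\left(^r_j\right)(-1)^{1-j}F_\rho(x-jt)\,dt$. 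The algebraic heart of the proof is the binomial identity $\sum_{j=1}^r\left(^r_j\right)(-1)^{1-j}F_\rho(x-jt)=F_\rho(x)-(-1)^r\Delta^r_{-t,\mathbf R^d}(F_\rho,x)$, which follows by rewriting the full alternating sum $\sum_{j=0}^r$ as $(-1)^r\Delta^r_{-t,\mathbf R^d}(F_\rho,x)$ and isolating the $j=0$ term. Combining this with $\int\phi=1$ produces the clean representation $f_0(x)-F_\rho(x)=-(-1)^r\int_{\mathbf R^d}\phi(t)\Delta^r_{-t,\mathbf R^d}(F_\rho,x)\,dt$.

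From here the estimate is routine. Bounding $\left|\Delta^r_{-t,\mathbf R^d}(F_\rho,x)\right|\le\omega_r(F_\rho,\|t\|_2,\mathbf R^d)$ and applying (\ref{modulsproperty}) with $s=\sigma$ gives $|f_0(x)-F_\rho(x)|\le\omega_r(F_\rho,\sigma,\mathbf R^d)\int_{\mathbf R^d}\phi(t)\left(1+\frac{\|t\|_2}{\sigma}\right)^r\,dt$. The substitution $t=\sigma w$ renders the remaining integral independent of $\sigma$ and equal to a finite constant depending only on $d$ and $r$, since all moments of the fixed Gaussian are finite. Finally, because $F_\rho(x)=f_\rho(x)$ for $x\in\mathbf I^d$ and $\omega_r(F_\rho,\sigma,\mathbf R^d)\le\omega_r(f_\rho,\sigma,\mathbf I^d)$ by parts i) and iii) of Lemma \ref{extension}, taking the supremum over $x\in\mathbf I^d$ delivers the claimed bound with $C=C(d,r)$.

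I expect the main obstacle to be purely the index-and-sign bookkeeping in the binomial identity, namely verifying that the unit-mass normalization of $K$ exactly cancels the zeroth-order term $F_\rho(x)$ and leaves precisely the $r$-th difference against $\phi$. Once that reduction is in hand, the quantitative bound is a standard Gaussian-moment estimate, and the passage from the globally defined $F_\rho$ on $\mathbf R^d$ back to $f_\rho$ on $\mathbf I^d$ is supplied entirely by Lemma \ref{extension}.
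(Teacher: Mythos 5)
Your proof is correct and follows essentially the same route as the paper's: a change of variables reduces each term of $K*F_\rho$ to a single normalized Gaussian acting on $F_\rho(x+j\mathbf h)$ (your $F_\rho(x-jt)$ is the same up to the sign of the integration variable), the unit-mass normalization absorbs the zeroth term into an $r$-th difference, and the bound then follows from $|\Delta^r|\le\omega_r$, property (\ref{modulsproperty}), a Gaussian-moment estimate, and Lemma \ref{extension}. The only cosmetic difference is that you spell out the final scaling estimate explicitly, where the paper delegates it to the method of \cite{Eberts2011}.
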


\begin{proof} It follows from the definition of $f_0$ that
\begin{eqnarray*}
          f_0(x)
          &=&
          \int_{\mathbf R^d}K(x-u)F_\rho(u)du\\
          &=&
          \sum_{j=1}^r\left(\begin{array}{cc}
          r\\
          j
          \end{array}
          \right)(-1)^{1-j}\frac1{j^d}\left(\frac{2}{\sigma^2\pi}\right)^{d/2}\int_{\mathbf
          R^d}G_{\frac{\sigma}{\sqrt{2}}}({\bf h})F_\rho(x+j{\bf h})j^dd{\bf
          h}\\
          &=&
          \int_{\mathbf
          R^d}\left(\frac{2}{\sigma^2\pi}\right)^{d/2}G_{\sigma/\sqrt{2}}({\bf h})\left(\sum_{j=1}^r\left(\begin{array}{cc}
          r\\
          j
          \end{array}
          \right)(-1)^{1-j}F_\rho(x+j{\bf h})\right)d{\bf h}.
\end{eqnarray*}
As
$$
         \int_{\mathbf
          R^d}\left(\frac{2}{\sigma^2\pi}\right)^{d/2}G_{\sigma/\sqrt{2}}({\bf
          h})d{\bf h}=1,
$$
it follows from Lemma \ref{extension} that
\begin{eqnarray*}
            &&\left|f_0(x)-f_\rho(x)\right|\\
            &=&
            \left|\int_{\mathbf
          R^d}\left(\frac{2}{\sigma^2\pi}\right)^{d/2}G_{\sigma/\sqrt{2}}({\bf h})\left(\sum_{j=1}^r\left(\begin{array}{c}
          r\\
          j
          \end{array}
          \right)(-1)^{1-j}F_\rho(x+j{\bf h})\right)d{\bf
          h}-F_\rho(x)\right|\\
          &=&
          \left|\int_{\mathbf
          R^d}\left(\frac{2}{\sigma^2\pi}\right)^{d/2}G_{\sigma/\sqrt{2}}({\bf h})\left(\sum_{j=1}^r\left(\begin{array}{cc}
          r\\
          j
          \end{array}
          \right)(-1)^{1-j}F_\rho(x+j{\bf h})-F_\rho(x)\right)d{\bf
          h}\right|\\
          &=&
          \left|\int_{\mathbf
          R^d}\left(\frac{2}{\sigma^2\pi}\right)^{d/2}G_{\sigma/\sqrt{2}}({\bf h})\left(\sum_{j=0}^r\left(
          \begin{array}{c}
          r\\
          j
          \end{array}
          \right)(-1)^{2r+j-1}F_\rho(x+j{\bf h})\right)d{\bf
          h}\right|\\
          &=&
          \left|\int_{\mathbf R^d}(-1)^{r+1}\left(\frac{2}{\sigma^2\pi}\right)^{d/2}G_{\sigma/\sqrt{2}}({\bf
          h})\Delta_{{\bf h},\mathbf R^d}^r(F_\rho,x)d{\bf
          h}\right|\\
          &\leq&
          \left|\int_{\mathbf R^d}(-1)^{r+1}\left(\frac{2}{\sigma^2\pi}\right)^{d/2}G_{\sigma/\sqrt{2}}({\bf
          h})\omega_r(f_\rho,\|{\bf h}\|_2,\mathbf I^d)d{\bf h}\right|.
\end{eqnarray*}
Then,   the same method as that of \cite{Eberts2011} yields that
$$
        \|f_\rho-f_0\|_{\mathbf I^d}\leq C\omega_r(f_\rho,\sigma,\mathbf I^d).
$$
\end{proof}

Furthermore, it can be easily deduced from \cite[Theorem
2.3]{Eberts2011} and Lemma \ref{extension} that the following Lemma
\ref{BOUND} holds.

\begin{lemma}\label{BOUND}
 Let $f_0$ be defined as in (\ref{approximant}). Then we have
 $f_0\in \mathcal H_\sigma$ with
$$
             \|f_0\|_\sigma\leq (\sigma\sqrt\pi)^{-d/2}(2^r-1)\sigma^{-d/2}\|f_\rho\|_{\mathbf I^d},\
             \mbox{and}\ \ \|f_0\|_{\mathbf I^d}\leq (2^r-1)\|f_\rho\|_{\mathbf I^d}.
$$
\end{lemma}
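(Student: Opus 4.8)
The plan is to prove the two assertions separately, both resting on the explicit algebraic form of the mollifier $K$ and on the extension properties recorded in Lemma \ref{extension}. The structural observation that drives everything is that the $j$-th normalization constant in $K$ collapses cleanly: since $\frac{1}{j^d}\left(\frac{2}{\sigma^2\pi}\right)^{d/2}=\left(\frac{2}{(j\sigma)^2\pi}\right)^{d/2}$, each summand of $K$ is exactly $\binom{r}{j}(-1)^{1-j}$ times the \emph{unit-mass} Gaussian density of width $j\sigma/\sqrt2$. Thus $f_0$ is a signed combination of $r$ honest Gaussian mollifications of $F_\rho$, whose signed coefficients have absolute values summing to $\sum_{j=1}^r\binom{r}{j}=2^r-1$. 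This explains the combinatorial factor $2^r-1$ appearing in both bounds.

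First I would dispatch the uniform bound, which is elementary. Starting from the integral representation of $f_0$ already derived in the proof of Lemma \ref{Jackson}, namely
$$
f_0(x)=\int_{\mathbf R^d}\left(\frac{2}{\sigma^2\pi}\right)^{d/2}G_{\sigma/\sqrt2}(\mathbf h)\left(\sum_{j=1}^r\binom{r}{j}(-1)^{1-j}F_\rho(x+j\mathbf h)\right)d\mathbf h,
$$
I would move the absolute value inside the integral, bound each $|F_\rho(x+j\mathbf h)|$ by $\|F_\rho\|_{\mathbf R^d}$, and use that $\left(\frac{2}{\sigma^2\pi}\right)^{d/2}G_{\sigma/\sqrt2}$ integrates to $1$. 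This gives $\|f_0\|_{\mathbf I^d}\le\|f_0\|_{\mathbf R^d}\le\left(\sum_{j=1}^r\binom{r}{j}\right)\|F_\rho\|_{\mathbf R^d}=(2^r-1)\|F_\rho\|_{\mathbf R^d}$, and the second assertion of Lemma \ref{extension} (which equates $\|F_\rho\|_{\mathbf R^d}$ with $\|f_\rho\|_{\mathbf I^d}$) yields the claimed sup-norm inequality.

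For the RKHS membership and norm estimate I would invoke \cite[Theorem 2.3]{Eberts2011} directly with $g=F_\rho$. That theorem is tailored to precisely the mollifier $K$ used here, and it asserts that the convolution $K*g$ belongs to the Gaussian RKHS $\mathcal H_\sigma$ together with an explicit norm bound in which the factor $2^r-1=\sum_{j=1}^r\binom{r}{j}$ and the normalizing power $(\sigma\sqrt\pi)^{-d/2}\sigma^{-d/2}$ already appear. Conceptually, one may read this off term by term: writing $f_0=\sum_{j=1}^r\binom{r}{j}(-1)^{1-j}h_j$ with $h_j$ the unit-mass Gaussian mollification of $F_\rho$ at scale $j\sigma$, each $h_j$ is a single-width Gaussian approximant to which the norm estimate of \cite[Theorem 2.3]{Eberts2011} applies, and the triangle inequality in $\mathcal H_\sigma$ recombines the pieces. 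Specializing to $g=F_\rho$ and again using the second assertion of Lemma \ref{extension} to pass from $\|F_\rho\|_{\mathbf R^d}$ to $\|f_\rho\|_{\mathbf I^d}$ then gives the first inequality.

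The hard part will be the RKHS statement, and specifically the analytic fact that a Gaussian mollification of the bounded, \emph{periodic} extension $F_\rho$ genuinely lies in $\mathcal H_\sigma$ with norm of the stated order. This is where the special polynomial-reproducing structure of $K$ and the Fourier-side characterization of the Gaussian RKHS enter, and it is exactly the content I am importing from \cite[Theorem 2.3]{Eberts2011}. The one point that requires a little care is that $F_\rho$ is not square-integrable on all of $\mathbf R^d$, so one should confirm that the rapid Gaussian decay of $K$ localizes the convolution enough that only the boundedness of $F_\rho$ (not its global integrability) is needed; once this is granted, everything else is the elementary bookkeeping above.
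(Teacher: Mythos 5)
Your proposal is correct and takes essentially the same route as the paper: the paper's entire proof of this lemma is the single remark that it ``can be easily deduced from \cite[Theorem 2.3]{Eberts2011} and Lemma \ref{extension}'', which are exactly the two ingredients you invoke, and your elementary unit-mass/triangle-inequality computation for the bound $\|f_0\|_{\mathbf I^d}\leq(2^r-1)\|f_\rho\|_{\mathbf I^d}$ is just the explicit form of that citation. If anything you are more careful than the paper, which never addresses the point you rightly flag --- that the periodic extension $F_\rho$ is not in $L_2(\mathbf R^d)$, so the cited theorem (whose RKHS bound is stated in terms of $\|g\|_{L_2(\mathbf R^d)}$) cannot be applied verbatim and some localization of the convolution is needed; this is also the most plausible source of the extra $\sigma^{-d/2}$ factor in the stated bound, rather than that factor appearing in the cited theorem itself.
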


 Lemma \ref{BOUND} together with Lemma \ref{Jackson} and
$f_\rho\in\mathcal F^{r,c_0}$ yields the following approximation
error estimation.

\begin{proposition}
Let $r>0$. If $f_\rho\in \mathcal F^{r,c_0}$, then
$$
         \mathcal D(m)\leq C\left(\sigma^{2r}+\frac{1}{m\sigma^d}\right),
$$
where $C$ is a constant depending only on $d$, $c_0$ and $r$.
\end{proposition}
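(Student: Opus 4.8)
The plan is to bound the two pieces of $\mathcal D(m)=\bigl(\mathcal E(f_0)-\mathcal E(f_\rho)\bigr)+\frac1m\|f_0\|_\sigma^2$ separately. For the first piece I would invoke the identity (\ref{equality}) to write $\mathcal E(f_0)-\mathcal E(f_\rho)=\|f_0-f_\rho\|_\rho^2$, which is legitimate since $f_0\in C(\mathbf I^d)\subset L^2_{\rho_X}$; then, because $\rho_X$ is a probability measure supported on $X=\mathbf I^d$, I pass to the uniform norm via $\|f_0-f_\rho\|_\rho^2=\int_X(f_0-f_\rho)^2\,d\rho_X\le\|f_0-f_\rho\|_{\mathbf I^d}^2$. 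The second piece is already in the desired shape, so the whole estimate reduces to controlling $\|f_0-f_\rho\|_{\mathbf I^d}$ and $\|f_0\|_\sigma$, both of which are supplied by the lemmas established above.

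For the uniform-norm term, Lemma \ref{Jackson} gives $\|f_\rho-f_0\|_{\mathbf I^d}\le C\,\omega_r(f_\rho,\sigma,\mathbf I^d)$, so that $\mathcal E(f_0)-\mathcal E(f_\rho)\le C\,\omega_r(f_\rho,\sigma,\mathbf I^d)^2$; it then remains to turn the smoothness hypothesis $f_\rho\in\mathcal F^{r,c_0}$ into the decay estimate $\omega_r(f_\rho,\sigma,\mathbf I^d)\le C\sigma^r$, with $C$ depending only on $d$, $r$, $c_0$. For the second term, Lemma \ref{BOUND} yields $\|f_0\|_\sigma^2\le C\sigma^{-d}\|f_\rho\|_{\mathbf I^d}^2$, and since $f_\rho$ takes values in $[-M,M]$ we have $\|f_\rho\|_{\mathbf I^d}\le M$, whence $\frac1m\|f_0\|_\sigma^2\le\frac{C}{m\sigma^d}$ (the bounded factor $\|f_\rho\|_{\mathbf I^d}$ being absorbed into $C$). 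Combining the two bounds gives $\mathcal D(m)\le C\bigl(\sigma^{2r}+\frac1{m\sigma^d}\bigr)$, as claimed.

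The main obstacle is the modulus estimate $\omega_r(f_\rho,\sigma,\mathbf I^d)\le C\sigma^r$. Writing $r=u+v$ with $u\in\mathbf N_0$ and $v\in(0,1]$, the $(r,c_0)$-smoothness says the order-$u$ partial derivatives of $f_\rho$ are $v$-H\"older with constant $c_0$. The standard route is to expand each term of the difference $\Delta^r_{{\bf h},\mathbf I^d}(f_\rho,{\bf x})$ by Taylor's formula about ${\bf x}$: the Taylor polynomial of degree $u$ is annihilated by the order-$r$ difference operator (whose binomial weights reproduce and hence cancel polynomials of degree $<r$, and $u<r$), while the remainder, expressed through order-$u$ derivatives evaluated at points within distance proportional to $\|{\bf h}\|_2$, is controlled by the H\"older condition and produces a factor $\|{\bf h}\|_2^{u+v}=\|{\bf h}\|_2^{r}$. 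Taking the supremum over $\|{\bf h}\|_2\le\sigma$ gives $\omega_r(f_\rho,\sigma,\mathbf I^d)\le C\sigma^r$ with $C=C(d,r,c_0)$, the scaling property (\ref{modulsproperty}) being available to normalize the step size. A technical point worth tracking is the integer-versus-noninteger nature of $r$ in $\Delta^r$: the order of the finite difference must be an integer strictly exceeding the smoothness order $r$ (consistent with the integer upper limit appearing in the definition of $K$), so that the polynomial-reproduction cancellation leaves only the H\"older remainder. Once this estimate is secured, the proposition follows by the elementary combination described above.
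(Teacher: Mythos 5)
Your proposal is correct and is essentially the paper's own argument: the paper proves this proposition in a single line by combining Lemma \ref{Jackson}, Lemma \ref{BOUND} and the hypothesis $f_\rho\in\mathcal F^{r,c_0}$, and the details you supply (passing from $\mathcal E(f_0)-\mathcal E(f_\rho)=\|f_0-f_\rho\|_\rho^2$ to the sup norm over $\mathbf I^d$, and the Taylor-plus-H\"older argument giving $\omega_r(f_\rho,\sigma,\mathbf I^d)\le C\sigma^r$) are exactly the steps the paper leaves implicit.

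One caveat is worth recording. You quote Lemma \ref{BOUND} as giving $\|f_0\|_\sigma^2\le C\sigma^{-d}\|f_\rho\|_{\mathbf I^d}^2$, but the lemma as printed reads $\|f_0\|_\sigma\le(\sigma\sqrt\pi)^{-d/2}(2^r-1)\sigma^{-d/2}\|f_\rho\|_{\mathbf I^d}$, which upon squaring yields $\sigma^{-2d}$, not $\sigma^{-d}$; with that version one only gets $\mathcal D(m)\le C\bigl(\sigma^{2r}+\frac1{m\sigma^{2d}}\bigr)$, which is strictly weaker and would not give the rates claimed later. The extra $\sigma^{-d/2}$ is evidently a typo (the source result, Theorem 2.3 of Eberts and Steinwart, carries no such factor), so your reading is the intended one --- but strictly speaking neither your argument nor the paper's follows from Lemma \ref{BOUND} as literally stated, and this is a defect of the paper rather than of your proof. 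Two further minor points: absorbing $\|f_\rho\|_{\mathbf I^d}\le M$ into the constant makes $C$ depend on $M$ as well, contrary to the proposition's claim that $C$ depends only on $d$, $c_0$, $r$ (again a sloppiness inherited from the paper); and the finite-difference order need not \emph{strictly} exceed $r$ --- order $u+1=\lceil r\rceil$ always suffices, which equals $r$ when $r$ is an integer (the case $v=1$), since $\Delta^{u+1}$ annihilates polynomials of degree $u$.
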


\subsection{Sample error estimation}

In this subsection, we will bound the sample error $\mathcal
S(m,\lambda,q)$. Upon using the short hand notations
$$
               \mathcal S_1(m):=\{\mathcal E_{\bf
               z}(f_0)-\mathcal E_{\bf
               z}(f_\rho)\}-\{\mathcal E(f_0)-\mathcal
               E(f_\rho)\}
$$
and
$$
               \mathcal S_2(m,\lambda,q):=\{\mathcal E(\pi_Mf_{{\bf z},\lambda,q})-\mathcal E(f_\rho)\}-\{\mathcal E_{\bf
               z}(\pi_Mf_{{\bf z},\lambda,q})-\mathcal E_{\bf z}(f_\rho)\},
$$
we have
\begin{equation}\label{sample decomposition}
            \mathcal S(m,\lambda,q)=\mathcal S_1(m)+\mathcal
            S_2(m,\lambda,q).
\end{equation}

To bound $\mathcal S_1(m)$, we need the following well known
Bernstein inequality \cite{Shi2011}.

\begin{lemma}\label{BERNSTEIN}
 Let $\xi$ be a random variable on a probability space
$Z$ with variance $\gamma^2$ satisfying $|\xi-\mathbf E\xi|\leq
M_\xi$ for some constant $M_\xi$. Then for any $0<\delta<1$, with
confidence $1-\delta$, we have
$$
             \frac1m\sum_{i=1}^m\xi(z_i)-\mathbf
             E\xi\leq\frac{2M_\xi\log\frac1\delta}{3m}+\sqrt{\frac{2\sigma^2\log\frac1\delta}{m}}.
$$
\end{lemma}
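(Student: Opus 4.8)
The plan is to treat this as the classical one-sided Bernstein concentration inequality and to obtain the stated confidence bound by inverting the standard exponential tail estimate. First I would center the summands: set $\eta_i := \xi(z_i)-\mathbf E\xi$, so that $\eta_1,\dots,\eta_m$ are independent, identically distributed, mean-zero, satisfy $|\eta_i|\le M_\xi$ almost surely, and share the common variance $\gamma^2$. The object to be controlled is then $\frac1m\sum_{i=1}^m\eta_i$, and the lemma asserts a one-sided deviation bound on it; here the quantity $\sigma^2$ appearing under the square root should be read as the variance $\gamma^2$ of $\xi$ (the symbol $\sigma$ in this bound being unrelated to the Gaussian width $\sigma$ used elsewhere in the paper).

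The core step is the exponential tail estimate, which I would derive by the Chernoff/moment-generating-function method. For $s>0$, using $|\eta_i|\le M_\xi$ together with the variance constraint gives the moment bound $\log\mathbf E[e^{s\eta_i}]\le \frac{\gamma^2}{M_\xi^2}\left(e^{sM_\xi}-1-sM_\xi\right)$, and the elementary inequality $e^u-1-u\le \frac{u^2/2}{1-u/3}$ (valid for $0\le u<3$) then yields $\log\mathbf E[e^{s\eta_i}]\le \frac{s^2\gamma^2/2}{1-sM_\xi/3}$. By independence, $\mathbf P\{\frac1m\sum_i\eta_i > t\}\le \exp(-smt)\prod_i\mathbf E[e^{s\eta_i}]$, and optimizing the resulting exponent over $s>0$ produces the familiar Bernstein tail
$$
        \mathbf P\left\{\frac1m\sum_{i=1}^m\eta_i > t\right\}\le \exp\left(-\frac{mt^2}{2\left(\gamma^2+M_\xi t/3\right)}\right).
$$

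Finally I would invert this to pass from a tail probability to the confidence form. Setting the right-hand side equal to $\delta$ and writing $\epsilon:=\log\frac1\delta$, the threshold $t$ solves the quadratic $mt^2-\frac{2\epsilon M_\xi}{3}t-2\epsilon\gamma^2=0$; taking the positive root and applying the subadditivity bound $\sqrt{a+b}\le\sqrt a+\sqrt b$ to the discriminant separates the two contributions and gives $t\le \frac{2M_\xi\epsilon}{3m}+\sqrt{\frac{2\gamma^2\epsilon}{m}}$, which is exactly the claimed estimate. The substantive concentration content is entirely standard, so the only real obstacle is the bookkeeping in this last step — matching the constants $2/3$ and $\sqrt2$ precisely through the quadratic and the square-root splitting. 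Indeed, as the statement indicates, this is a well-known form of Bernstein's inequality and could alternatively be quoted directly from the cited reference \cite{Shi2011}.
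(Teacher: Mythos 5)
Your proposal is correct, but it cannot be compared to an internal argument, because the paper supplies none: Lemma~\ref{BERNSTEIN} is simply quoted as ``the following well known Bernstein inequality'' with a citation to \cite{Shi2011}, which is exactly the fallback you mention in your last sentence. What you add is the complete standard derivation, and every step checks out: the centering $\eta_i=\xi(z_i)-\mathbf E\xi$; the moment bound $\log\mathbf E[e^{s\eta_i}]\le \gamma^2M_\xi^{-2}\left(e^{sM_\xi}-1-sM_\xi\right)$, which follows from $\mathbf E[\eta_i^k]\le\gamma^2M_\xi^{k-2}$ for $k\ge2$ together with $\log(1+x)\le x$; the elementary estimate $e^u-1-u\le\frac{u^2/2}{1-u/3}$ for $0\le u<3$ (valid since $2/k!\le 3^{2-k}$); the Chernoff step with the choice $s=t/(\gamma^2+M_\xi t/3)$, which yields exactly the tail $\exp\left(-\frac{mt^2}{2(\gamma^2+M_\xi t/3)}\right)$; and the inversion of the quadratic $mt^2-\frac{2\epsilon M_\xi}{3}t-2\epsilon\gamma^2=0$ with $\sqrt{a+b}\le\sqrt a+\sqrt b$, which reproduces the constants $2/3$ and $\sqrt2$ precisely. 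Your reading of the $\sigma^2$ under the square root as the variance $\gamma^2$ (and not the Gaussian width $\sigma$ used throughout the rest of the paper) is also the correct interpretation of what is a notational slip in the statement. In short: the paper's citation buys brevity for a standard result; your derivation buys a self-contained verification that the stated constants are exactly right, which is worth having given that the lemma is then applied quantitatively in Proposition~\ref{s1}.
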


By the help of Lemma \ref{BERNSTEIN}, we provide an upper bound
estimate of $\mathcal S_1(m).$

\begin{proposition}\label{s1}
 For any $0<\delta<1$, with confidence
$1-\frac\delta2$, there holds
$$
              \mathcal S_1(m)\leq \frac{7(3M+(2^r-1)M)^2\log\frac2\delta)}{3m}+\frac12\mathcal D(m)
$$
\end{proposition}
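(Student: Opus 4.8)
The plan is to apply the Bernstein inequality (Lemma~\ref{BERNSTEIN}) to a single, well-chosen random variable whose empirical mean minus expectation is exactly $\mathcal S_1(m)$. Recall that
$$
\mathcal S_1(m)=\{\mathcal E_{\bf z}(f_0)-\mathcal E_{\bf z}(f_\rho)\}-\{\mathcal E(f_0)-\mathcal E(f_\rho)\}.
$$
Writing out the empirical and true errors, I would set
$$
\xi(z)=(f_0(x)-y)^2-(f_\rho(x)-y)^2,
$$
so that $\frac1m\sum_{i=1}^m\xi(z_i)=\mathcal E_{\bf z}(f_0)-\mathcal E_{\bf z}(f_\rho)$ and $\mathbf E\xi=\mathcal E(f_0)-\mathcal E(f_\rho)$. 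Hence $\mathcal S_1(m)$ is precisely the quantity $\frac1m\sum\xi(z_i)-\mathbf E\xi$ controlled by Lemma~\ref{BERNSTEIN}, and it remains only to estimate the bound $M_\xi$ on $|\xi-\mathbf E\xi|$ and the variance $\gamma^2$.

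First I would factor $\xi(z)=(f_0(x)-f_\rho(x))\bigl((f_0(x)-y)+(f_\rho(x)-y)\bigr)$. Using the bound $\|f_0\|_{\mathbf I^d}\le(2^r-1)\|f_\rho\|_{\mathbf I^d}\le(2^r-1)M$ from Lemma~\ref{BOUND} together with $|f_\rho|\le M$ and $|y|\le M$, I get $|f_0(x)-y|\le(2^r-1)M+M$ and $|f_\rho(x)-y|\le 2M$, and $|f_0(x)-f_\rho(x)|\le(2^r-1)M+M=(2^r)M$. Combining these crude bounds yields $|\xi|\le$ (some constant)$\cdot M^2$, whence $|\xi-\mathbf E\xi|\le M_\xi$ with $M_\xi$ of the form $(3M+(2^r-1)M)^2$ after a careful bookkeeping of the factors; this explains the exact constant appearing in the statement.

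Next I would estimate the variance. Here the key trick, standard in this literature, is that $\gamma^2=\mathrm{Var}(\xi)\le\mathbf E\xi^2$, and that $\mathbf E\xi^2$ can be bounded by $M_\xi\cdot\mathbf E|\xi|$, which in turn relates back to $\mathcal E(f_0)-\mathcal E(f_\rho)=\|f_0-f_\rho\|_\rho^2$, a piece of $\mathcal D(m)$. Concretely, $\mathbf E\xi=\mathcal E(f_0)-\mathcal E(f_\rho)$ and $\mathbf E\xi^2\le C\,M^2\,\mathbf E\xi\le C\,M^2(\mathcal E(f_0)-\mathcal E(f_\rho))$, so $\gamma^2\lesssim M^2(\mathcal E(f_0)-\mathcal E(f_\rho))$. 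Substituting $M_\xi$ and this variance bound into Lemma~\ref{BERNSTEIN} and applying the elementary inequality $\sqrt{ab}\le\frac12(ta^{-1}\cdot\gamma^2+t^{-1}b)$ type splitting—i.e.\ $2\sqrt{u\,v}\le \eta u+\eta^{-1}v$ with $u=\mathcal E(f_0)-\mathcal E(f_\rho)$—lets me absorb the square-root term into a $\frac12\mathcal D(m)$ contribution (since $\mathcal E(f_0)-\mathcal E(f_\rho)\le\mathcal D(m)$) plus a term of order $\frac{M^2\log(2/\delta)}{m}$.

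The main obstacle, and the step that requires the most care, is the variance-to-expectation conversion together with the Cauchy–Schwarz-style splitting of the square-root term: one must choose the splitting parameter so that exactly the coefficient $\tfrac12$ of $\mathcal D(m)$ emerges while keeping the residual $O(\log(2/\delta)/m)$ term with the precise constant $\tfrac{7(3M+(2^r-1)M)^2}{3}$. The confidence level $1-\tfrac\delta2$ (rather than $1-\delta$) anticipates that a second half of the budget will be spent bounding $\mathcal S_2(m,\lambda,q)$ later, so the union bound over both sample-error pieces gives total confidence $1-\delta$. Everything else is routine substitution once $M_\xi$ and $\gamma^2$ are pinned down.
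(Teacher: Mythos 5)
Your proposal follows essentially the same route as the paper's own proof: the identical random variable $\xi(z)=(f_0(x)-y)^2-(f_\rho(x)-y)^2$, the bound $\|f_0\|_{\mathbf I^d}\leq (2^r-1)M$ from Lemma \ref{BOUND} to control $|\xi|$ by $M_\xi=(3M+(2^r-1)M)^2$, the variance-to-expectation conversion $\mathbf E\xi^2\leq M_\xi\,\mathbf E\xi\leq M_\xi\mathcal D(m)$, and the Bernstein inequality followed by an AM--GM split of the square-root term into $\frac12\mathcal D(m)$ plus an $O(M_\xi\log(2/\delta)/m)$ remainder. The plan is correct; the only detail left implicit in your "bookkeeping" is that Bernstein must be applied with the bound $|\xi-\mathbf E\xi|\leq 2M_\xi$ (not $M_\xi$), which is exactly what produces the $\frac{4}{3}+1=\frac{7}{3}$ constant in the statement.
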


\begin{proof}
 Let the random variable $\xi$ on $Z$ be defined by
\[
\xi({\bf z})=(y-f_0(x))^2-(y-f_\rho(x))^2 \quad {\bf z}=(x,y)\in Z.
\]
Since $|f_\rho(x)|\leq M$ and $\|f_0\|_{\mathbf I^d}\leq
C_r:=(2^r-1)M$ almost everywhere, we have
\begin{eqnarray*}
          |\xi({\bf z})|
          &=&
          (f_\rho(x)-f_0(x))(2y-f_0(x)-f_\rho(x))\\
          &\leq&
          (M+C_r)(3M+C_r)
          \leq
           M_\xi:=(3M+C_r)^2
\end{eqnarray*}
and almost surely
$$
            |\xi-\mathbf E\xi|\leq 2M_\xi.
$$
Moreover, we have
\begin{eqnarray*}
            E(\xi^2)
            =
            \int_Z(f_0(x)+f_\rho(x)-2y)^2(f_0(x)-f_\rho(x))^2d\rho
            \leq
             M_\xi\|f_\rho-f_0\|^2_\rho,
\end{eqnarray*}
which implies that the variance $\gamma^2$ of $\xi$ can be bounded
as $\sigma^2\leq E(\xi^2)\leq M_\xi\mathcal D(m).$ Now applying
Lemma \ref{BERNSTEIN}, with confidence $1-\frac\delta2$, we have
\begin{eqnarray*}
       \mathcal
       S_1(m)
       &=&
       \frac1m\sum_{i=1}^m\xi(z_i)-E\xi
       \leq
       \frac{4M_\xi\log\frac2\delta}{3m}+\sqrt{\frac{2M_\xi\mathcal
       D(m)\log\frac{2}{\delta}}{m}}\\
       &\leq&
       \frac{7(3M+C_r)^2\log\frac2\delta}{3m}+\frac12\mathcal D(m).
\end{eqnarray*}
\end{proof}

To bound $\mathcal S_2(m,\lambda,q)$, an $l^2$ empirical covering
number \cite{Shi2011} should be introduced. Let $(\mathcal
M,\tilde{d})$ be a pseudo-metric space and $T\subset\mathcal M$ a
subset. For every $\varepsilon>0$, the covering number $\mathcal
N(T,\varepsilon,\tilde{d})$ of $T$ with respect to $\varepsilon$ and
$\tilde{d}$ is defined as the minimal number of balls of radius
$\varepsilon$ whose union covers $T$, that is,
$$
                 \mathcal N(T,\varepsilon,\tilde{d}):=\min\left\{l\in\mathbf
                 N: T\subset\bigcup_{j=1}^lB(t_j,\varepsilon)\right\}
$$
for some $\{t_j\}_{j=1}^l\subset\mathcal M$, where
$B(t_j,\varepsilon)=\{t\in\mathcal
M:\tilde{d}(t,t_j)\leq\varepsilon\}$. The $l^2$-empirical covering
number of a function set is defined by means of the normalized
$l^2$-metric $\tilde{d}_2$ on the Euclidean space $\mathbf R^d$
given in with $
                  \tilde{d}_2({\bf
                  a,b})=\left(\frac1m\sum_{i=1}^m|a_i-b_i|^2\right)^\frac12
$
 for  ${\bf a}=(a_i)_{i=1}^m, {\bf
                  b}=(b_i)_{i=1}^m\in\mathbf R^m.$
\begin{definition}
 Let $\mathcal F$ be a set of functions on $X$,
${\bf x}=(x_i)_{i=1}^m$,  and
$$
            \mathcal F|_{\bf x}:=\{(f(x_i))_{i=1}^m:f\in\mathcal F\}\subset R^m.
$$
 Set $\mathcal
N_{2,{\bf x}}(\mathcal F,\varepsilon)=\mathcal N(\mathcal F|_{\bf
x},\varepsilon,\tilde{d}_2)$. The $l^2$-empirical covering number of
$\mathcal F$ is defined by
$$
                 \mathcal N_2(\mathcal
                 F,\varepsilon):=\sup_{m\in\mathbf N}\sup_{{\bf
                 x}\in S^m}\mathcal N_{2,{\bf x}}(\mathcal
                 F,\varepsilon),\ \ \varepsilon>0.
$$
\end{definition}

The following two lemmas
  can be easily deduced from \cite[Theorem 2.1]{Steinwart2007}
and \cite{Sun2011}, respectively.

\begin{lemma}\label{COVERINGNUMBER}
Let $0<\sigma\leq 1$, $X\subset \mathbf R^d$ be a compact subset
with nonempty interior. Then for all $0<p\leq 2$ and all $\mu>0$,
there exists a constant $C_{p,\mu,d}>0$ independent of $\sigma$ such
that for all $\varepsilon>0$, we have
$$
          \log \mathcal N_2(B_{H_\sigma},\varepsilon)\leq
          C_{p,\mu,d}\sigma^{(p/2-1)(1+\mu)d}\varepsilon^{-p}.
$$
\end{lemma}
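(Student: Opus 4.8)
The plan is to obtain the bound as a direct translation of the entropy number estimates for Gaussian kernels established in \cite{Steinwart2007}. The essential input is the bound, proved there for the inclusion $\mathrm{id}:\mathcal H_\sigma\hookrightarrow \ell_\infty(X)$, on the dyadic entropy numbers of the unit ball $B_{H_\sigma}$: for every $0<p\le 2$ and every $\mu>0$ there is a constant $c_{p,\mu,d}>0$, \emph{independent of $\sigma$}, such that
$$
  e_i\bigl(\mathrm{id}:\mathcal H_\sigma\to \ell_\infty(X)\bigr)\le c_{p,\mu,d}\,\sigma^{-(1/p-1/2)(1+\mu)d}\,i^{-1/p},\qquad i\in\mathbf N,
$$
the hypothesis $0<\sigma\le 1$ guaranteeing that the negative power of $\sigma$ stays within the range for which that estimate is formulated. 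The whole $\sigma$-dependence is carried by the explicit prefactor, which is the delicate point of the reference; I would import this estimate after matching the width convention of \cite{Steinwart2007} against $G_\sigma(x,x')=\exp\{-\|x-x'\|_2^2/\sigma^2\}$, so that the exponent comes out exactly as $(1/p-1/2)(1+\mu)d$.

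Next I would convert entropy numbers into the covering number via the standard duality $e_i(T)=\inf\{\varepsilon>0:\mathcal N(T,\varepsilon)\le 2^{i-1}\}$. Fixing $\varepsilon>0$ and choosing the least integer $i$ with $c_{p,\mu,d}\sigma^{-(1/p-1/2)(1+\mu)d}i^{-1/p}\le\varepsilon$, namely $i=\lceil(c_{p,\mu,d}\sigma^{-(1/p-1/2)(1+\mu)d}/\varepsilon)^p\rceil$, yields $e_i\le\varepsilon$ and hence $\log_2\mathcal N(B_{H_\sigma},\varepsilon,\|\cdot\|_{\ell_\infty(X)})< i$. Raising the prefactor to the $p$-th power turns $\sigma^{-(1/p-1/2)(1+\mu)d\cdot p}$ into $\sigma^{(p/2-1)(1+\mu)d}$, which produces precisely the claimed factor $C_{p,\mu,d}\sigma^{(p/2-1)(1+\mu)d}\varepsilon^{-p}$, absorbing the additive $+1$ from the ceiling into the constant. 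For $\varepsilon$ larger than the $\ell_\infty$-diameter of $B_{H_\sigma}$ the left side is $0$ and the inequality is trivial, so all $\varepsilon>0$ are covered.

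Finally I would pass from the sup-norm covering number to the $l^2$-empirical covering number $\mathcal N_2$ of the preceding definition. For any sample ${\bf x}=(x_i)_{i=1}^m$ and any $f,g\in B_{H_\sigma}$ one has $\tilde d_2(f|_{\bf x},g|_{\bf x})=(\frac1m\sum_{i=1}^m|f(x_i)-g(x_i)|^2)^{1/2}\le\|f-g\|_{\ell_\infty(X)}$, so every $\varepsilon$-net of $B_{H_\sigma}$ in $\|\cdot\|_{\ell_\infty(X)}$ is simultaneously an $\varepsilon$-net in $\tilde d_2$ for all $m$ and all ${\bf x}\in X^m$. Taking the supremum over $m$ and ${\bf x}$ gives $\mathcal N_2(B_{H_\sigma},\varepsilon)\le\mathcal N(B_{H_\sigma},\varepsilon,\|\cdot\|_{\ell_\infty(X)})$, and combining this with the previous step establishes the stated bound. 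I expect the only genuine obstacle to lie in the first step, namely verifying that the imported constant is truly independent of $\sigma$ and that the width convention is reconciled so the power of $\sigma$ is correct; the entropy-to-covering conversion and the elementary domination $\tilde d_2\le\|\cdot\|_{\ell_\infty(X)}$ are routine.
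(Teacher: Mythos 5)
Your proposal is correct and takes essentially the same route as the paper: the paper gives no argument for this lemma beyond the remark that it ``can be easily deduced from [Theorem 2.1, Steinwart--Scovel 2007]'', and your write-up is precisely that deduction --- importing the $\sigma$-uniform entropy/covering estimate from that reference, reconciling the width convention (their inverse-width $\sigma\geq 1$ versus the paper's $0<\sigma\leq 1$), and dominating the empirical metric $\tilde d_2$ by $\|\cdot\|_{\ell_\infty(X)}$. The exponent bookkeeping, $\bigl(1/p-1/2\bigr)(1+\mu)d\cdot p=(1-p/2)(1+\mu)d$, and the entropy-number-to-covering-number conversion are both handled correctly, so nothing essential is missing.
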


\begin{lemma}\label{CONCENTRATION}
  Let $\mathcal F$ be a
class of measurable functions on $Z$. Assume that there are
constants $B,c>0$ and $\alpha\in[0,1]$ such that $\|f\|_\infty\leq
B$ and $\mathbf Ef^2\leq c(\mathbf E f)^\alpha$ for every
$f\in\mathcal F.$ If for some $a>0$ and $p\in(0,2)$,
\begin{equation}\label{condition}
                  \log\mathcal N_2(\mathcal F,\varepsilon)\leq
                  a\varepsilon^{-p},\ \ \forall\varepsilon>0,
\end{equation}
then there exists a constant $c_p'$ depending only on $p$ such that
for any $t>0$, with probability at least $1-e^{-t}$, there holds
\begin{equation}\label{lemma3}
                 \mathbf Ef-\frac1m\sum_{i=1}^mf(z_i)\leq\frac12\eta^{1-\alpha}(\mathbf
                 Ef)^\alpha+c_p'\eta\nonumber
                 +
                 2\left(\frac{ct}{m}\right)^\frac1{2-\alpha}
                 +
                 \frac{18Bt}{m},\
                 \forall f\in\mathcal F,
\end{equation}
where
$$
               \eta:=\max\left\{c^\frac{2-p}{4-2\alpha+p\alpha}\left(\frac{a}m\right)^\frac2{4-2\alpha+p\alpha},
               B^\frac{2-p}{2+p}\left(\frac{a}m\right)^\frac2{2+p}\right\}.
$$
\end{lemma}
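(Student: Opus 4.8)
The plan is to read (\ref{lemma3}) as a one-sided, \emph{localized} uniform deviation bound for the empirical process $G(f):=\mathbf E f-\frac1m\sum_{i=1}^mf(z_i)$ over $\mathcal F$, in which $\eta$ plays the role of the critical radius (fixed point) of the local complexity generated by the entropy bound (\ref{condition}) together with the variance hypothesis $\mathbf E f^2\le c(\mathbf E f)^\alpha$. I would assemble the estimate from three ingredients: a peeling of $\mathcal F$ according to the size of $\mathbf E f$, a covering-number (chaining) control of the fluctuation of $G$ on each localized piece, and the one-sided Bernstein inequality supplied by Lemma \ref{BERNSTEIN}.

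First, the peeling. I would partition $\mathcal F$ into shells $\mathcal F_j:=\{f\in\mathcal F:\rho_{j-1}<\mathbf E f\le\rho_j\}$ along a geometric sequence of radii $\rho_j$, together with a base piece $\{f\in\mathcal F:\mathbf E f\le\eta\}$. The point of this split is that on $\mathcal F_j$ the hypothesis $\mathbf E f^2\le c(\mathbf E f)^\alpha$ forces every $f$ to have variance at most $c\rho_j^\alpha$, so the fluctuation of $G$ over $\mathcal F_j$ is governed by the covering numbers of $\mathcal F$ at scales up to $\sqrt{c\rho_j^\alpha}$, all of which (\ref{condition}) bounds by $a\varepsilon^{-p}$.

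Second, on a fixed shell I would bound $\sup_{f\in\mathcal F_j}G(f)$. Covering $\mathcal F_j$ at scale $\varepsilon$, applying Lemma \ref{BERNSTEIN} to each of the $\mathcal N_2(\mathcal F,\varepsilon)$ centers (variance at most $c\rho_j^\alpha$, range at most $B$), and taking a union bound yields, after absorbing the cardinality through $\log\mathcal N_2(\mathcal F,\varepsilon)\le a\varepsilon^{-p}$, a deviation of order $\sqrt{a\varepsilon^{-p}c\rho_j^\alpha/m}+a\varepsilon^{-p}B/m$ plus the discretization error $\varepsilon$; running this through a chaining argument, legitimate because $p<2$ makes $\int_0^R\varepsilon^{-p/2}\,d\varepsilon$ converge, sharpens the single-scale estimate to the fixed-point rate (a Talagrand-type concentration of the supremum would streamline the shell-wise step, but the net-plus-Bernstein route suffices). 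Denoting by $\Psi(\rho)$ the resulting complexity, which is of order $\sqrt{a/m}\,(c\rho^\alpha)^{(2-p)/4}$, the balance $\Psi(\eta)=\frac12\eta$ reproduces, in its exponents, the first entry $c^{(2-p)/(4-2\alpha+p\alpha)}(a/m)^{2/(4-2\alpha+p\alpha)}$ of the maximum defining $\eta$; the second entry $B^{(2-p)/(2+p)}(a/m)^{2/(2+p)}$ is the companion fixed point in the regime where the boundedness contribution $a\varepsilon^{-p}B/m$ rather than the variance contribution dominates, so that $\eta$ is correctly the larger of the two.

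Third, I would reassemble. For $f$ in shell $j$ the estimate takes the form $G(f)\le\Psi(\rho_j)+\sqrt{2c\rho_j^\alpha t/m}+\mbox{const}\cdot\frac{Bt}{m}$. Since $\Psi$ is sub-linear, with exponent $\alpha(2-p)/4\le\alpha$, for $\rho_j\ge\eta$ one has $\Psi(\rho_j)\le\Psi(\eta)(\rho_j/\eta)^\alpha=\frac12\eta^{1-\alpha}\rho_j^\alpha$, while the base piece contributes only the constant term $c_p'\eta$. The Bernstein variance deviation $\sqrt{2c\rho_j^\alpha t/m}$ is handled by a weighted Young inequality with conjugate exponents $2/\alpha$ and $2/(2-\alpha)$, which converts it, at the cost of the residual $2(ct/m)^{1/(2-\alpha)}$, into a multiple of $\mathbf E f$ that is reabsorbed by the localization term, and the sup-norm part of Bernstein supplies $\frac{18Bt}{m}$. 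As $\rho_j$ is comparable to $\mathbf E f$ on shell $j$, these are precisely the four terms of (\ref{lemma3}), and a union bound over the geometrically many shells only replaces $t$ by $t$ plus a logarithm of their number, a cost logarithmic in $m$ that is absorbed into the constants. The genuine difficulty, and where I expect the real work to lie, is exactly this combination step together with the chaining: verifying that the local complexity is sub-linear, pinning down $\eta$ as its fixed point with the correct two-regime maximum, and tracking constants through the peeling so that the explicit coefficients $\frac12$, $2$ and $18$ in (\ref{lemma3}) emerge.
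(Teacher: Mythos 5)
The first thing to note is that the paper contains no proof of Lemma \ref{CONCENTRATION} for you to be compared against: the sentence preceding it states that the lemma ``can be easily deduced from'' \cite{Sun2011}, so the result is imported from the literature (it goes back to the Wu--Ying--Zhou line of work and is also used as Lemma 2 in \cite{Shi2011}). Measured against those sources, your sketch has the right architecture --- localization by the size of $\mathbf E f$, an entropy-driven bound on the local fluctuation whose convergence uses $p<2$, an exponential inequality for deviations, and a Young-type inequality producing the terms $\frac12\eta^{1-\alpha}(\mathbf Ef)^\alpha$ and $2(ct/m)^{1/(2-\alpha)}$ --- and your fixed-point bookkeeping for both branches of $\eta$ is correct. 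But it contains one genuine gap, and it is precisely the point the cited proofs are built to handle.

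The gap is in your shell-wise step: you cover $\mathcal F_j$ at scale $\varepsilon$, apply Lemma \ref{BERNSTEIN} ``to each of the $\mathcal N_2(\mathcal F,\varepsilon)$ centers,'' and take a union bound. The quantity $\mathcal N_2$ in hypothesis (\ref{condition}) is the $l^2$-\emph{empirical} covering number: only its cardinality is bounded uniformly over samples, while the net itself is a cover of the projection $\mathcal F|_{\bf x}$ and therefore consists of sample-dependent points. Bernstein's inequality applies to a fixed random variable; it cannot be invoked at centers chosen after seeing the data, and there is no fixed finite family over which a union bound can be taken. This is exactly why the proofs in \cite{Sun2011} and \cite{Shi2011} first symmetrize (ghost sample / Rademacher variables), so that the chaining with empirical covering numbers is performed conditionally on the data, bound the conditional Rademacher average by Mendelson's entropy-integral estimate (where $p\in(0,2)$ and the two-regime maximum defining $\eta$ arise), and then pass from expectation to the probability-$(1-e^{-t})$ statement via a one-sided Talagrand--Bousquet concentration inequality for the supremum. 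That concentration inequality is the very ingredient you set aside as an optional ``streamlining''; with empirical covering numbers the net-plus-Bernstein route does not suffice. A secondary flaw: your union bound over the shells replaces the confidence $1-e^{-t}$ by $1-Ne^{-t}$, i.e.\ forces $t\mapsto t+\log N$ inside the additive terms $2(ct/m)^{1/(2-\alpha)}$ and $18Bt/m$, and the claim that this is ``absorbed into the constants'' is not justified uniformly in $a$, $B$, $m$; the cited proofs avoid the issue entirely by applying the concentration inequality once to the normalized class $\left\{\frac{r}{\max(r,\mathbf Ef)}f:f\in\mathcal F\right\}$ (whose members inherit the variance bound $cr^\alpha$) rather than shell by shell, which is also how the clean constants $\frac12$, $2$, $18$ survive.
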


We are now in a position to deduce an upper bound estimate for
$\mathcal S_2(m,\lambda,q)$.

\begin{proposition}\label{s2}
 Let $0<\delta<1$ and $f_{{\bf z},\lambda,q}$  be defined as in
 (\ref{algorihtm1}). Then for arbitrary $0<p\leq 2$ and arbitrary $\mu>0$, there exists a
 constant $C$ depending only on $d$, $\mu$, $p$ and $M$ such that
 $$
        \mathcal S_2(m,\lambda,q)
        \leq
        \frac12\{\mathcal E(f_{m,\lambda,q})-\mathcal
        E(f_\rho)\}+C \log\frac2\delta m^{-\frac{2}{2+p}}\sigma^{\frac{(p-2)(1+\mu)d}{2+p}}A(\lambda,m,q,p)
$$
with confidence at least $1-\frac\delta2$, where
$$
            A(\lambda,m,q,p):=\left\{\begin{array}{cc}
             (M^{-2}\lambda)^{\frac{-2p}{q(2+p)}}, & 0<q\leq 1,\\
            m^{\frac{2p(q-1)}{q(2+p)}}(M^{-2}\lambda)^{-\frac{2p}{q(2+p)}}, &
            q\geq 1.\end{array}\right.
$$

\end{proposition}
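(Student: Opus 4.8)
The plan is to realize $\mathcal{S}_2(m,\lambda,q)$ as a single empirical deviation and then apply Lemma \ref{CONCENTRATION} uniformly over a deterministic function class that contains the (random) clipped estimator. For each $f$ set $g_f(z)=(y-\pi_M f(x))^2-(y-f_\rho(x))^2$, so that $\mathbf E g_f=\mathcal E(\pi_M f)-\mathcal E(f_\rho)=\|\pi_M f-f_\rho\|_\rho^2$ and $\frac1m\sum_{i=1}^m g_f(z_i)=\mathcal E_{\bf z}(\pi_M f)-\mathcal E_{\bf z}(f_\rho)$; hence $\mathcal{S}_2(m,\lambda,q)=\mathbf E g_{f_{{\bf z},\lambda,q}}-\frac1m\sum_{i=1}^m g_{f_{{\bf z},\lambda,q}}(z_i)$. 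Because $f_{{\bf z},\lambda,q}$ depends on the sample, I cannot treat it as fixed and must control $\mathbf E g_f-\frac1m\sum g_f(z_i)$ for all $f$ in a suitable class. The entry point is the a priori bound from comparing the minimizer in (\ref{algorihtm1}) with $f=0$: the minimal objective is at most $\frac1m\sum_{i=1}^m y_i^2\le M^2$, so $\lambda\sum_{i=1}^m|a_i|^q\le M^2$, giving $\sum_{i=1}^m|a_i|^q\le M^2/\lambda=:R$.

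Next I would convert this $l^q$ constraint into a scaled Gaussian RKHS ball, which is the bridge linking the sample-dependent space $\mathcal G_{\sigma,{\bf z}}$ to the capacity estimate of Lemma \ref{COVERINGNUMBER}. By monotonicity of $l^q$-norms, $\sum_i|a_i|^q\le R$ yields $\sum_i|a_i|\le D$, where $D=R^{1/q}$ for $0<q\le1$ and $D=m^{1-1/q}R^{1/q}$ for $q\ge1$. Since each $G_\sigma(x_i,\cdot)$ has unit RKHS norm ($\|G_\sigma(x_i,\cdot)\|_\sigma=\sqrt{G_\sigma(0)}=1$) and $B_{H_\sigma}$ is convex, any $f=\sum_i a_iG_\sigma(x_i,\cdot)$ with $\sum_i|a_i|\le D$ satisfies $\|f\|_\sigma\le D$, i.e. $f\in D\,B_{H_\sigma}$. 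So it suffices to apply the concentration bound to $\mathcal G:=\{g_f: f\in D\,B_{H_\sigma}\}$. The map $f\mapsto g_f$ is Lipschitz: clipping is $1$-Lipschitz and $|y|,|\pi_M f|,|f_\rho|\le M$, so $|g_f(z)-g_{f'}(z)|\le 4M|f(x)-f'(x)|$, whence $\mathcal N_2(\mathcal G,\varepsilon)\le \mathcal N_2(D\,B_{H_\sigma},\varepsilon/(4M))$. Using $\mathcal N_2(D\,B_{H_\sigma},\cdot)=\mathcal N_2(B_{H_\sigma},\cdot/D)$ together with Lemma \ref{COVERINGNUMBER} gives $\log\mathcal N_2(\mathcal G,\varepsilon)\le a\varepsilon^{-p}$ with $a=C_{p,\mu,d}(4M)^pD^p\sigma^{(p/2-1)(1+\mu)d}$.

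It then remains to invoke Lemma \ref{CONCENTRATION}. The variance--mean relation holds with $\alpha=1$: from $|2y-\pi_M f-f_\rho|\le 4M$ one gets $\mathbf E g_f^2\le 16M^2\,\mathbf E g_f$, so $c=16M^2$, while $\|g_f\|_\infty\le B:=8M^2$. Taking $t=\log(2/\delta)$ gives confidence $1-\delta/2$, and with $\alpha=1$ the leading term $\tfrac12\eta^{1-\alpha}(\mathbf E g_f)^\alpha$ collapses to $\tfrac12(\mathcal E(\pi_M f_{{\bf z},\lambda,q})-\mathcal E(f_\rho))$, which is the first term of the claim. Since $c,B$ are both $O(M^2)$, the quantity reduces to $\eta\le C_{M,p}(a/m)^{2/(2+p)}$; substituting $a$ and simplifying $(p/2-1)\cdot\frac{2}{2+p}=\frac{p-2}{2+p}$ gives $\eta\le C\,m^{-2/(2+p)}\sigma^{(p-2)(1+\mu)d/(2+p)}D^{2p/(2+p)}$. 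A direct evaluation of $D^{2p/(2+p)}$ in the two regimes reproduces exactly $A(\lambda,m,q,p)$, and the residual terms $2ct/m+18Bt/m=O(M^2\log(2/\delta)/m)$ are of smaller order than $m^{-2/(2+p)}$ (as $p>0$), so they are absorbed into $C$, which depends only on $d,\mu,p,M$.

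The main obstacle is the second step: correctly bridging the finite-dimensional, sample-dependent space $\mathcal G_{\sigma,{\bf z}}$ with the RKHS capacity estimate. The delicate points are (i) justifying the embedding $f\in D\,B_{H_\sigma}$ through unit-norm kernel sections and convexity, rather than attempting to compute RKHS norms of overlapping Gaussians directly, and (ii) tracking the radius $D$ across the regimes $0<q\le1$ and $q\ge1$ so that the exponents of $m$ and of $M^{-2}\lambda$ land precisely on $A(\lambda,m,q,p)$. Once the class $\mathcal G$ and its covering number are identified, the remaining application of Lemma \ref{CONCENTRATION} and the computation of $\eta$ are essentially mechanical.
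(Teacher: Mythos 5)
Your proposal is correct and follows essentially the same route as the paper's proof: the same comparison with $f=0$ to get $\lambda\sum_i|a_i|^q\le M^2$, the same conversion of the coefficient bound into an RKHS-ball radius ($R_q$ in the paper, your $D$) via unit-norm kernel sections, the same Lipschitz reduction of the covering number of the squared-loss class to Lemma \ref{COVERINGNUMBER}, and the same application of Lemma \ref{CONCENTRATION} with $\alpha=1$, $c=16M^2$, followed by the identical evaluation of $\eta$ in the two regimes of $q$. The only differences are organizational (you fix the deterministic radius before invoking concentration, the paper estimates $R_q$ afterwards), and both arguments share the same implicit absorption of the $O(M^2\log(2/\delta)/m)$ remainder into the main term.
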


\begin{proof}
 We apply Lemma \ref{CONCENTRATION} to the set of functions
$\mathcal F_{R_q}$,
 where
\begin{equation}\label{space}
        \mathcal F_{R_q}:=\left\{(y-\pi_Mf(x))^2-(y-f_\rho(x))^2:f\in
        B_{R_q}\right\}
\end{equation}
and
$$
        B_{R_q}:=\left\{f=\sum_{i=1}^ma_iG_\sigma(x_i,x):
        \|f\|_\sigma\leq R_q\right\}.
$$
 Each function
$g\in\mathcal F_{R_q}$ has the form
$$
         g(z)=(y-\pi_Mf(x))^2-(y-f_\rho(x))^2, \quad f\in B_{R_q},
$$
and is automatically a function on $Z$. Hence
$$
           \mathbf Eg=\mathcal E(f)-\mathcal
           E(f_\rho)=\|\pi_Mf-f_\rho\|_\rho^2
$$
and
$$
           \frac1m\sum_{i=1}^mg(z_i)=\mathcal E_{\bf z}(\pi_Mf)-\mathcal
           E_{\bf z}(f_\rho),
$$
where $z_i:=(x_i,y_i)$. Observe that
$$
           g(z)=(\pi_Mf(x)-f_\rho(x))((\pi_Mf(x)-y)+(f_\rho(x)-y)).
$$
Therefore,
$$
            |g(z)|\leq 8M^2
$$
and
$$
           \mathbf
           Eg^2
           =
           \int_Z(2y-\pi_Mf(x)-f_\rho(x))^2(\pi_Mf(x)-f_\rho(x))^2d\rho
           \leq
           16M^2\mathbf Eg.
$$
For $g_1,g_2\in\mathcal F_{R_q}$ and arbitrary $m\in\mathbf N$, we
have
$$
        \left(\frac1m\sum_{i=1}^m(g_1(z_i)-g_2(z_i))^2\right)^{1/2}
        \leq
        \left(\frac{4M}m\sum_{i=1}^m(f_1(x_i)-f_2(x_i))^2\right)^{1/2}
$$
It follows that
$$
         \mathcal N_{2,{\bf z}}(\mathcal F_{R_q},\varepsilon)
         \leq
          \mathcal N_{2,{\bf
         x}}\left(
         B_{R_q},\frac\varepsilon{4M
         }\right)
         \leq
         \mathcal N_{2,{\bf x}}\left(
         B_{1_q},\frac\varepsilon{4MR_q}\right),
$$
which together with  Lemma \ref{COVERINGNUMBER} implies
$$
            \log \mathcal N_{2,{\bf z}}(\mathcal F_{R_q},\varepsilon)\leq
           C_{p,\mu,d}\sigma^{\frac{p-2}2(1+\mu)d}(4MR_q)^p\varepsilon^{-p}.
$$
By Lemma \ref{CONCENTRATION} with $B=c=16M^2$, $\alpha=1$ and
$a=C_{p,\mu,d}\sigma^{\frac{p-2}2(1+\mu)d}(4MR_q)^p$, we know that
for any $\delta\in (0,1),$ with confidence $1-\frac\delta2,$ there
exists a constant $C$ depending only on $d$  such that for all
$g\in\mathcal F_{R_q}$
$$
        \mathbf Eg-\frac1m\sum_{i=1}^mg(z_i)
        \leq
        \frac12\mathbf
        Eg+C\eta+C(M+1)^2\frac{\log(4/\delta)}{m}.
$$
Here
$$
             \eta=\{16M^2\}^{\frac{2-p}{2+p}}C_{p,\mu,d}^{\frac2{2+p}}m^{-\frac{2}{2+p}}\sigma^{\frac{p-2}2(1+\mu)d\frac{2}{2+p}}R_q^{\frac{2p}{2+p}}.
$$
Hence, we obtain
$$
           \mathbf
           Eg-\frac1m\sum_{i=1}^mg(z_i)\leq\frac12\mathbf
           Eg+\{16(M+1)^2\}^{\frac{2-p}{2+p}}C_{p,\mu,d}^{\frac2{2+p}}m^{-\frac{2}{2+p}}
           \sigma^{\frac{p-2}2(1+\mu)d\frac{2}{2+p}}R_q^{\frac{2p}{2+p}}\log\frac{4}{\delta}.
$$
Now we turn to estimate $R_q$.
 It follows form the definition of $f_{{\bf z},\lambda,q}$ that
$$
             \lambda\sum_{i=1}^m|a_i|^q\leq\mathcal E_{\bf
             z}(0)+\lambda\cdot0\leq M^2.
$$
Thus,
$$
        \sum_{i=1}^m|a_i|\leq \left\{
        \begin{array}{cc}
        \left(\sum_{i=1}^m|a_i|^q\right)^{\frac1q}\leq\left(M^2/\lambda\right)^{1/q},
        & 0<q<1,\\
        m^{1-\frac1q}\left(\sum_{i=1}^m|a_i|^q\right)^{\frac1q}\leq
        m^{1-1/q}\left(M^2/\lambda\right)^{1/q}, & q\geq 1.
        \end{array}
        \right.
$$
On the other hand,
$$
         \|f_{{\bf
         z},\lambda,q}\|_\sigma=\left\|\sum_{i=1}^ma_iK_\sigma(x_i,\cdot)\right\|_\sigma\leq\sum_{i=1}^m|a_i|.
$$
That is,
$$
       \|f_{{\bf z},\lambda,q}\|_\sigma\leq \left\{
        \begin{array}{cc}
       \left(M^2/\lambda\right)^{1/q},
        & 0<q<1,\\
        m^{1-1/q}\left(M^2/\lambda\right)^{1/q}, & q\geq 1.
        \end{array}
        \right.
$$
Set
$$R_q:=\left\{
        \begin{array}{cc}
       \left(M^2/\lambda\right)^{1/q},
        & 0<q<1,\\
        m^{1-1/q}\left(M^2/\lambda\right)^{1/q}, & q\geq 1,
        \end{array}
        \right.
$$
we finishes the proof of Proposition \ref{s2}.
\end{proof}

\subsection{Hypothesis error estimation}

In this subsection, we give an error estimate for $\mathcal
P(m,\lambda,q)$.

\begin{proposition}\label{hypothesiserror}
 If  $f_{{\bf z},\lambda,q}$ and $f_0$  are defined in (\ref{algorihtm}) and (\ref{approximant})  respectively,
  then we have
$$
               \mathcal P(m,\lambda,q)\leq \left\{\begin{array}{cc}
               m^{2-q/2}\lambda M^q, &0<q\leq 2\\
               \lambda mM^q, & q>2.\end{array}\right.
$$
\end{proposition}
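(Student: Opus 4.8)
The plan is to exploit the defining minimality of $f_{{\bf z},\lambda,q}$ together with the contraction property of the clipping operator, and to compare the regularized empirical objective against a surrogate of $f_0$ living inside $\mathcal G_{\sigma,{\bf z}}$. First I would record the elementary clipping inequality $\mathcal E_{\bf z}(\pi_M f_{{\bf z},\lambda,q})\le \mathcal E_{\bf z}(f_{{\bf z},\lambda,q})$, valid because each $y_i\in[-M,M]$ and truncation to $[-M,M]$ can only decrease $(y_i-t)^2$. Since $f_{{\bf z},\lambda,q}$ minimizes $\mathcal E_{\bf z}(f)+\lambda\Omega_{\bf z}^q(f)$ over $\mathcal G_{\sigma,{\bf z}}$, for every competitor $\tilde f=\sum_{i=1}^m\tilde a_iG_\sigma(x_i,\cdot)\in\mathcal G_{\sigma,{\bf z}}$ we obtain
$$
\mathcal E_{\bf z}(\pi_M f_{{\bf z},\lambda,q})+\lambda\Omega_{\bf z}^q(f_{{\bf z},\lambda,q})\le \mathcal E_{\bf z}(\tilde f)+\lambda\Omega_{\bf z}^q(\tilde f).
$$
Substituting this into the definition of $\mathcal P(m,\lambda,q)$ gives $\mathcal P(m,\lambda,q)\le\left(\mathcal E_{\bf z}(\tilde f)-\mathcal E_{\bf z}(f_0)\right)+\lambda\Omega_{\bf z}^q(\tilde f)-\frac1m\|f_0\|_\sigma^2$, which holds for an arbitrary element $\tilde f$ of the sample dependent hypothesis space.

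The second step is to choose $\tilde f$ well. Because the Gaussian kernel is strictly positive definite, the Gram matrix $\left[G_\sigma(x_i,x_j)\right]_{i,j=1}^m$ is invertible, so there is a unique $\tilde f\in\mathcal G_{\sigma,{\bf z}}$ that interpolates $f_0$ at the sampling nodes, i.e. $\tilde f(x_i)=f_0(x_i)$ for $i=1,\dots,m$. For this choice $\mathcal E_{\bf z}(\tilde f)=\mathcal E_{\bf z}(f_0)$, so the bracketed term vanishes, and discarding the nonnegative quantity $\frac1m\|f_0\|_\sigma^2$ leaves the clean estimate $\mathcal P(m,\lambda,q)\le\lambda\Omega_{\bf z}^q(\tilde f)=\lambda\sum_{i=1}^m|\tilde a_i|^q$. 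Thus the whole proposition is reduced to an $\ell^q$ bound on the interpolation coefficient vector $\tilde a=(\tilde a_i)_{i=1}^m$.

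For that bound I would first use Lemma \ref{BOUND}, which yields $|f_0(x_i)|\le\|f_0\|_{\mathbf I^d}\le (2^r-1)M$, so the vector of target values has entries of size $O(M)$. Converting $q$-independent control of $\tilde a$ into the two stated $\ell^q$ bounds is then a matter of standard norm-interpolation inequalities: for $q>2$ one uses $\sum_i|\tilde a_i|^q\le \|\tilde a\|_{\ell^\infty}^{q-2}\sum_i|\tilde a_i|^2$, and for $0<q\le2$ the power-mean/Hölder inequality $\sum_i|\tilde a_i|^q\le m^{1-q/2}\left(\sum_i|\tilde a_i|^2\right)^{q/2}$. Feeding in the coordinate and $\ell^2$ bounds on $\tilde a$ then produces $\lambda mM^q$ in the first regime and the (deliberately crude) bound $m^{2-q/2}\lambda M^q$ in the second. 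The split at $q=2$ is exactly the point where $\|\cdot\|_{\ell^q}$ switches from being dominated by $\|\cdot\|_{\ell^2}$ to dominating it.

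The main obstacle is the coefficient estimate itself: writing $\tilde a=\left[G_\sigma(x_i,x_j)\right]^{-1}\left(f_0(x_i)\right)_{i=1}^m$ exposes the conditioning of the Gaussian Gram matrix, whose least eigenvalue can be extremely small, so a naive bound through $\|G^{-1}\|$ is useless. The delicate part is therefore to control the $\ell^2$ and $\ell^\infty$ size of $\tilde a$ uniformly in the sample configuration---presumably through the sample dependent Gaussian approximation estimates invoked elsewhere in the paper---so that the clean powers of $m$ in the statement emerge; once that input is in hand, the two regimes follow at once from the norm inequalities above.
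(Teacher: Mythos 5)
Your reduction is sound as far as it goes: the clipping inequality plus the minimality of $f_{{\bf z},\lambda,q}$ over $\mathcal G_{\sigma,{\bf z}}$ do give
$$
\mathcal P(m,\lambda,q)\leq \left(\mathcal E_{\bf z}(\tilde f)-\mathcal E_{\bf z}(f_0)\right)+\lambda\Omega_{\bf z}^q(\tilde f)-\frac1m\|f_0\|_\sigma^2
$$
for every competitor $\tilde f\in\mathcal G_{\sigma,{\bf z}}$, and taking $\tilde f$ to interpolate $f_0$ at the sample points kills the first bracket. But the step you defer --- controlling $\|\tilde a\|_2$ and $\|\tilde a\|_\infty$ ``uniformly in the sample configuration'' --- is not a technical input you can borrow from elsewhere in the paper; it is the entire difficulty, and for your choice of $\tilde f$ it is simply false. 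The interpolation coefficients are $\tilde a=G_\sigma[{\bf x}]^{-1}\left(f_0(x_i)\right)_{i=1}^m$, and no configuration-independent bound exists: take two samples at distance $\varepsilon$. The Gram matrix has an eigenvalue of order $\varepsilon^2/\sigma^2$ in the direction $(1,-1)/\sqrt 2$, while the component of the data vector $\left(f_0(x_1),f_0(x_2)\right)$ in that direction is of order $\varepsilon$ (since $f_0$ is Lipschitz), so the coefficients blow up like $\sigma^2/\varepsilon$ as the points merge. Knowing $f_0\in\mathcal H_\sigma$ only bounds the quadratic form $\tilde a^TG_\sigma[{\bf x}]\tilde a\leq\|f_0\|_\sigma^2$, which gives no control of $\|\tilde a\|_2$ or $\|\tilde a\|_\infty$ when $G_\sigma[{\bf x}]$ is nearly singular. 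Hence your two norm inequalities (which are themselves correct) have nothing valid to feed on, and the bounds $m^{2-q/2}\lambda M^q$ and $\lambda m M^q$ cannot be reached along this route.

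The paper's proof dodges exactly this obstacle by not interpolating. Its competitor is the ridge solution $f_{\bf z}=\sum_{i=1}^mb_iG_\sigma(x_i,\cdot)$ minimizing $\mathcal E_{\bf z}(f)+\frac1m\|f\|_\sigma^2$ over the whole space $\mathcal H_\sigma$; by the representer theorem its coefficients solve $(I_m+G_\sigma[{\bf x}]){\bf b}={\bf y}$, that is, $b_i=y_i-f_{\bf z}(x_i)$. This buys two things at once. First, the coefficients are the empirical residuals, so $\sum_{i=1}^mb_i^2=m\,\mathcal E_{\bf z}(f_{\bf z})\leq mM^2$ (compare the ridge objective with the zero function), no matter how ill-conditioned the Gram matrix is --- the identity block $I_m$ coming from the ridge term is precisely what keeps the coefficients bounded, and is the uniform coefficient estimate your interpolant lacks. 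Second, since $f_0\in\mathcal H_\sigma$, minimality of $f_{\bf z}$ gives $\mathcal E_{\bf z}(f_{\bf z})+\frac1m\|f_{\bf z}\|_\sigma^2\leq\mathcal E_{\bf z}(f_0)+\frac1m\|f_0\|_\sigma^2$, which cancels the whole subtracted term in $\mathcal P(m,\lambda,q)$, not merely $\mathcal E_{\bf z}(f_0)$. After these two observations, your own splitting at $q=2$ (power-mean for $0<q\leq2$, the $\ell^\infty$--$\ell^2$ factorization for $q>2$) applied to the residual vector ${\bf b}$ yields the stated bounds. If you want to repair your write-up with minimal change, replace the interpolant of $f_0$ by this ridge solution; the rest of your argument then goes through.
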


\begin{proof}
 If the vector ${\bf b}:=(b_1,\dots,b_m)^T$ satisfies $(
I_m+G_\sigma[{\bf x}]){\bf b}={\bf y}$, then there holds ${\bf b}=
{\bf y}-G_\sigma[{\bf x}]{\bf b}$. Here, ${\bf
y}:=(y_1,\dots,y_m)^T$ and $G_\sigma[{\bf x}]$ be the $m\times m$
matrix with its elements being $(G_\sigma(x_i,x_j))_{i,j=1}^m$. Then
it follows from the well known representation theorem
\cite{Cucer2007} that
$$
       f_{\bf
         z}:=\sum_{i=1}^mb_iG_\sigma(x_i,\cdot)
$$
 is the solution to
$$
      \arg\min_{f\in \mathcal H_\sigma}\left\{\mathcal E_{\bf z}(f)+\frac1m\|f\|^2_\sigma\right\}.
$$
Hence, if we write $f_{{\bf
z},\lambda,q}=\sum_{i=1}^ma_iG_\sigma(x_i,x)$, then
\begin{eqnarray*}
         &&\mathcal E_{\bf z}(\pi_Mf_{{\bf
         z},\lambda,q})+\lambda\sum_{i=1}^m|a_i|^q
         \leq
         \mathcal E_{\bf
         z}(f_{{\bf
         z},\lambda,q})+\lambda\sum_{i=1}^m|a_i|^q\leq\mathcal
         E_{\bf z}(f_{\bf z})+\lambda\sum_{i=1}^m|b_i|^q\\
         &=&
         \mathcal E_{\bf z}(f_{\bf
         z})+\lambda\sum_{i=1}^m|y_i-f_{\bf z}(x_i)|^q\\
         &\leq&
         \mathcal E_{\bf z}(f_{\bf z})+\left\{\begin{array}{cc}
         m^{2-q/2}\lambda(\mathcal E_{\bf z}(f_{\bf z}))^{q/2},&
         0<q\leq 2,\\
         \lambda m(\mathcal E_{\bf z}(f_{\bf z}))^{q/2}, &q>2
         \end{array}
         \right.\\
         &\leq&
         \mathcal E_{\bf z}(f_{\bf z})+\frac1m\|f_{\bf z}\|_\sigma^2+\left\{\begin{array}{cc}
         m^{2-q/2}\lambda(\mathcal E_{\bf z}(f_{\bf z})+1/m\|f_{\bf z}\|_\sigma^2)^{q/2},&
         0<q\leq 2,\\
         \lambda m(\mathcal E_{\bf z}(f_{\bf z})+1/m\|f_{\bf z}\|_\sigma^2)^{q/2},
         &
         q>2.
         \end{array}
         \right.
\end{eqnarray*}
Recalling that
$$
               \mathcal E_{\bf z}(f_{\bf z}) +\frac1m\|f_{\bf
               z}\|_\sigma^2\leq M^2,
$$
we get
\begin{eqnarray*}
         \mathcal E_{\bf z}(\pi_Mf_{{\bf
         z},\lambda,q})+\lambda\sum_{i=1}^m|a_i|^q
         &\leq&
         \mathcal E_{\bf z}(f_{\bf z})+\frac1m\|f_{\bf z}\|_\sigma^2+\left\{\begin{array}{cc}
         m^{2-q/2}\lambda M^q,&
         0<q\leq 2,\\
         \lambda m M^q,
         &
         q>2.
         \end{array}
         \right.\\
         &\leq&
         \mathcal E_{\bf z}(f_0)+\frac1m\|f_0\|_\sigma^2+\left\{\begin{array}{cc}
         m^{2-q/2}\lambda M^q,&
         0<q\leq 2,\\
         \lambda m M^q,
         &
         q>2.
         \end{array}
         \right.
\end{eqnarray*}
This finishes the proof of Proposition \ref{hypothesiserror}.
\end{proof}

\subsection{Learning rate analysis}

\begin{proof}[Proof of Theorem \ref{main result}]
  We assemble the results in Propositions 1
through 5   to write
\begin{eqnarray*}
        &&\mathcal E(f_{{\bf z},\lambda,q})-\mathcal E(f_\rho)
        \leq
        \mathcal D(m)+\mathcal S(m,\lambda,q)+\mathcal P(m,\lambda,q)\\
        &\leq&
         C(\sigma^{2r}+\sigma^{-d}/m)
         +
         \frac{7(3M+(2^r-1)M)^2\log\frac2\delta)}{3m}\\
         &+&
        \frac12\{\mathcal E(f_{m,\lambda,q})-\mathcal
        E(f_\rho)\}+C \log\frac4\delta m^{-\frac{2}{2+p}}\sigma^{\frac{(p-2)(1+\mu)d}{2+p}}A(\lambda,m,q,p)
        +
        \mathcal B(\lambda,m,q)
\end{eqnarray*}
holds with confidence at least $1-\delta$, where
$$
            A(\lambda,m,q,p):=\left\{\begin{array}{cc}
            (M^{-2}\lambda)^{\frac{-2p}{q(2+p)}}, & 0<q\leq 1,\\
            m^{\frac{2p(q-1)}{q(2+p)}}(M^{-2}\lambda)^{-\frac{2p}{q(2+p)}}, &
            q\geq 1.\end{array}\right.
$$
and
$$
             B(\lambda,m,q):= \left\{\begin{array}{cc}
               m^{2-q/2}\lambda M^q, &0<q\leq 2\\
               \lambda m M^q, & q>2.\end{array}\right.
$$
Thus, for $0<q<1$, $\sigma=m^{-\frac1{2r+d}}$,
$\lambda=M^2m^{\frac{-12r-4d+2rq+qd}{4r+2d}},$ if we set
$\mu=\frac{\varepsilon}{2d},
p=\frac{q\varepsilon}{4rq+12r+4d-dq-1.5\varepsilon}$, then
$$
            \mathcal E(f_{{\bf z},\lambda,q})-\mathcal
            E(f_\rho)\leq
            C\log\frac4{\delta}m^{-\frac{2r-\varepsilon}{2r+d}}
$$
holds with confidence at least $1-\delta$, where $C$ is a constant
depending only on $d$ and $r$.

For $1\leq q\leq 2$, $\sigma=m^{-\frac1{2r+d}}$,
$\lambda=M^2m^{\frac{-12r-4d+2rq+qd}{4r+2d}},$ if we set
$\mu=\frac{\varepsilon}{2d}$, $
        p=\frac{\varepsilon q}{6rq+8r+2d-1.5\varepsilon q},
$ then
$$
            \mathcal E(f_{{\bf z},\lambda,q})-\mathcal
            E(f_\rho)\leq
            C\log\frac4{\delta}m^{-\frac{2r-\varepsilon}{2r+d}}
$$
holds with confidence at least $1-\delta$, where $C$ is a constant
depending only on $d$ and $r$.

For $q> 2$, $\sigma=m^{-\frac1{2r+d}}$,
$\lambda=M^2m^{\frac{-4r-d}{2r+d}},$ if we set
$\mu=\frac{\varepsilon}{2d},$ $
          p=\frac{\varepsilon q}{4r+6qr+qd-1.5\varepsilon q},
$ then
$$
            \mathcal E(f_{{\bf z},\lambda,q})-\mathcal
            E(f_\rho)\leq
            C\log\frac4{\delta}m^{-\frac{2r-\varepsilon}{2r+d}}
$$
holds with confidence at least $1-\delta$, where $C$ is a constant
depending only on $d$, $M$ and $r$. This finishes the proof of the
main result.
\end{proof}

\section*{Acknowledgement}
In the previous version of this article: Neural Computation 26,
2350-2378 (2014), we made a mistake that we lose an $m$ factor in
bounding the hypothesis error, which was kindly pointed out by Prof.
Yongyuan Zhang.  Therefore, the  regularization parameter is
inappropriately selected. We have corrected it in this version. We
are grateful for Prof. Yongquang Zhang for his careful reading and
sorry for our carelessness in writing the previous version. The
research was supported by the National 973 Programming
(2013CB329404), the Key Program of National Natural Science
Foundation of China (Grant No. 11131006).

\vskip 0.2in

% that's all folks

\begin{thebibliography}{100}
\providecommand{\natexlab}[1]{#1} \expandafter\ifx\csname
urlstyle\endcsname\relax
  \providecommand{\doi}[1]{doi:\discretionary{}{}{}#1}\else
  \providecommand{\doi}{doi:\discretionary{}{}{}\begingroup
  \urlstyle{rm}\Url}\fi


\bibitem[{Barron et al.,2008}]{Barron2008}
 A. R. Barron, A. Cohen, W. Dahmen  and R. A. Devore.
Approximation and learning by greedy algorithms. Ann. Statist., 36:
64-94, 2008.

\bibitem[{Blanchard et al.,2008}]{Blanchard2008}
G. Blanchard, O. Bousquet and P. Massart. Statistical performance of
support vector machines. Ann. Statis., 36: 489-531, 2008.






%\bibitem{Brown2005}
%G. Brown and F. Dai. Approximation of smooth functions on compact
%two-point homogeneous spaces. J. Funct. Anal., 220: 401-423, 2005.


%\bibitem[{Cao et al.,2013}]{Cao2013}
%W. F. Cao, J. Sun and Z. B. Xu. Fast image deconvolution using
%closed-form thresholding formulas of $L_q$ $(q=1/2,2/3)$
%regularization. J. Visual Commun. \& Image Represent., 24:
%1529-1542, 2013.

\bibitem[{Caponnetto and DeVito,2007}]{Caponnetto2007}
{A. Caponnetto and E. DeVito. Optimal rates for the regularized
least squares algorithm. Found. Comput. Math., 7 (2007), 331-368.}



\bibitem[{Cucker and Smale,2001}]{Cucker2001}
{F. Cucker and S.  Smale. On the mathematical foundations of
learning. Bull. Amer. Math. Soc., 39: 1-49, 2001.}


%\bibitem{Cucker2002}
%F. Cucker, S. Smale, Best choices for regularization parameters in
%learning theory: on the bias-variance problem, Found. Comp. Math., 2
%(2002), 413-428.

\bibitem[{Cucker and Zhou,2007}]{Cucer2007}
F. Cucker and D. X. Zhou. Learning Theory: An Approximation Theory
Viewpoint.  Cambridge University Press, Cambridge, 2007.





%
%\bibitem[{Daubechies,1992}]{Daubechies1992}
%I. Daubechies. Ten Lectures on Wavelets. SIAM, Philadelphia, 1992.


\bibitem{Daubechies2004}
I. Daubechies, M. Defrise and C. De Mol. An iterative thresholding
algorithm for linear inverse problems with a sparsity constraint.
Commun. Pure  Appl. Math., 57: 1413-1457, 2004.

\bibitem{Daubechies2010}
I. Daubechies, R. Devore, M. Fornasier and C. G\"{u}nt\"{u}rk.
Iteratively re-weighted least squares minimization for sparse
recovery. Commun. Pure  Appl. Math., 63: 1-38, 2010.



\bibitem{DeVore1993}
R.   DeVore and G.   Lorentz. Constructive Approximation.
Springer-Verlag, Berlin, 1993.




\bibitem{Devore2006}
R. DeVore, G. Kerkyacharian, D. Picard and V. Temlyakov.
Approximation methods for supervised learning. Found. Comput. Math.,
6: 3-58, 2006.

%\bibitem{Gia2008}
%Q. T. Le Gia, H. N. Mhaskar, Localized linear polynomial operators
%and quadrature formulas on the spheres, SIAM J. Numer. Anal., 47 (1)
%(2008), 440-466.



\bibitem{Eberts2011}
M. Eberts and I. Steinwart. Optimal learning rates for least squares
SVMs using Gaussian kernels. In Advances in Neural Information
Processing Systems 24 : 1539-1547, 2011.


%\bibitem{Gyorfi2002}
%L. Gy\"{o}rfy, M. Kohler, A. Krzyzak, H. Walk, A Distribution-Free
%Theory of Nonparametric Regression, Springer, Berlin, 2002.

\bibitem{Feng2011}
Y. L. Feng and S. G. Lv. Unified approach to coefficient-based
regularized regression. Comput. Math. Appl., 62: 506-515, 2011.



\bibitem{Gyorfy}
L. Gy\"{o}rfy, M. Kohler, A. Krzyzak and H. Walk. A
Distribution-Free Theory of Nonparametric Regression, Springer,
Berlin, 2002.



\bibitem{Hastie2001}
T. Hastie, R. Tibshirani and J. Friedman. The Elements of
Statistical Learning. Springer, New York, 2001.

\bibitem{Hu2011}
T. Hu.  Online regression with varying Gaussians and non-identical
distributions. Anal. Appl. 9:  395-408, 2011.
%\bibitem{Kerkyacharian2011}
%G. Kerkyacharian, T. Ngoc and D. Picard. Localized spherical
%deconvolution. Ann. Statist., 39: 1042-1068, 2011.



\bibitem{Lin2013}
S. B. Lin, C. Xu, J. S. Zeng and J. Fang.  Does generalization
performance of $l^q$ regularization learning depend on $q$? A
negative example. ArXiv preprint, arXiv:1307.6616, 2013.

\bibitem{Lin2013a}
S. B. Lin, Y. H. Rong, X. P. Sun and Z. B. Xu. Learning capability
of relaxed greedy algorithms, IEEE Trans. Neural Netw. \& Learn.
Syst., 24: 1598-1608, 2013.

%\bibitem[{Lin et al.,2013b}]{Lin2013b}
%S. B. Lin, X. Liu, Y. H. Rong and Z. B. Xu. Almost optimal estimates
%for approximation and learning by radial basis function networks.
%Mach. Learn., DOI: 10.1007/s10994-013-5406-z.


\bibitem{Lin2013c}
S. B. Lin, X. Liu, J. Fang and Z. B. Xu. Is extreme learning machine
feasible? A theoretical assessment (Part II). ArXiv preprint,
arXiv:1401.6240, 2014.


%\bibitem{Maiorov1999}
%V. Maiorov and J. Ratsaby. On the degree of approximation by
%manifolds of finite pseudo-dimension. Constr. Approx., 15: 291-300,
%1999.
%
%\bibitem{Maiorov2001}
%V. Maiorov and R. Meir. Lower bounds for multivariate approximation
%by affine-invariant dictionaries. IEEE Trans. Inform. Theory, 47:
%1569-1575, 2001.
%
%
%\bibitem{Maiorov2003}
%V. E. Maiorov. On best approximation of classes by radial functions.
%J. Approx. Theory, 120: 36-70, 2003.
%
%
%\bibitem{Maiorov2006}
%V. E. Maiorov. Pseudo-dimension and entropy of manifolds formed by
%affine invariant dictionary. Adv. Comput. Math., 25: 435-450, 2006.
%
%\bibitem{Maiorov2006a}
%V. E. Maiorov. Approximation by neural networks and learning theory.
%J. Complex., 22: 102-117, 2006.
%
%
%
%
%
%\bibitem{Mendelson2003}
%S. Mendelson and R. Vershinin. Entropy and the combinatorial
%dimension. Invent. Math., 125: 37-55, 2003.


\bibitem{Mendelson2008}
S. Mendelson and J. Neeman. Regularization in kernel learning.
 Ann. Statist.,  38: 526-565, 2010.



%\bibitem{Micchelli2006}
%C. A. Micchelli, Y. Xu and H. Zhang. Universal Kernels. J. Mach.
%Learn. Res., 7: 2651-2667, 2006.


\bibitem{Minh2010}
H. Minh. Some properties of Gaussian reproducing kernel Hilbert
spaces and their implications for function approximation and
learning theory, Constr. Approx., 32: 307-338, 2010.






%\bibitem{Mhaskar1999}
%H. N. Mhaskar, F. J. Narcowich and J. D. Ward. Approximation
%properties of zonal function networks using scattered data on the
%sphere. Adv. Comput. Math., 11: 121-137, 1999.
%
%\bibitem{Mhaskar2000}
%H. N. Mhaskar, F.J. Narcowich and J.D. Ward. Spherical
%Marcinkiewicz-Zygmund inequalities and positive quadrature. Math.
%Comput., 70: 1113-1130, 2000.
%
%\bibitem{Muller1966}
%C. M\"{u}ller. Spherical Harmonics. Lecture Notes in Mathematics,
%Vol. 17, Springer, Berlin, 1966.
%
%\bibitem{Petrushev1999}
%P. P. Petrushev. Approximation by ridge functions and neural
%networks. SIAM J. Math. Anal., 30: 155-189, 1999.
%
%\bibitem{Petrushev2008}
%P. P. Petrushev, Y. Xu. Localized polynomial frames on the ball.
%Constr. Approx., 27: 121-148, 2008.

\bibitem{Scholkopf2001}
B. Sch\"{o}lkopf and A. J. Smola. Learning with Kernel: Support
Vector Machine, Regularization, Optimization, and Beyond (Adaptive
Computation and Machine Learning). The MIT Press, Cambridge, 2001.



\bibitem{Shi2011}
L. Shi, Y. L.  Feng and D. X. Zhou. Concentration estimates for
learning with $l^1$-regularizer and data dependent hypothesis
spaces. Appl. Comput. Harmon. Anal., 31: 286-302, 2011.

\bibitem{Song2011b}
G. H. Song and H. Z. Zhang. Reproducing kernel Banach spaces with
the $l^1$ norm II: error analysis for regularized least square
regression. Neural Comput., 23: 2713-2729, 2011.


\bibitem{Song2011}
G. H. Song, H. Z. Zhang and F. J. Hickernell. Reproducing kernel
Banach spaces with the $l^1$ norm. 34: 96-116, 2013




\bibitem{Steinwart2006}
I. Steinwart, D. Hush and C. Scovel. An explicit description of the
reproducing kernel Hilbert spaces of Gaussian RBF kernels. IEEE
Trans. Inform. Theory, 52: 4635-4643, 2006.


\bibitem{Steinwart2007}
I. Steinwart and C. Scovel. Fast rates for support vector machines
using Gaussian kernels. Ann. Statist., 35: 575-607, 2007.

\bibitem{Steinwart2008}
I. Steinwart  and A. Christmann. Support Vector Machines. Springer,
New York, 2008.

\bibitem{Steinwart2009}
 {I. Steinwart, D. Hush and  C. Scovel. Optimal rates for regularized least squares
 regression.
 In Proceedings of the 22nd Conference on Learning Theory, 2009.
 Los Alamos National Laboratory Technical Report LA-UR-09-00901, 2009.}




\bibitem{Sun2011}
H. W. Sun and Q. Wu. Least square regression with indefinite kernels
and coefficient regularization, Appl. Comput. Harmon. Anal., 30:
96-109, 2011.

%\bibitem{Taylor2004}
%J. S. Taylor and N. Cristianini. Kernel Methods for Pattern
%Analysis. Cambridge University Press, Cambridge, 2004.


\bibitem{Tibshirani1995}
R. Tibshirani. Regression shrinkage and selection via the LASSO. J.
ROY. Statist. Soc. Ser. B, 58: 267-288, 1995.

\bibitem{Tong2010}
H. Z. Tong, D. R. Chen and F. H. Yang. Least square regression with
$l^p$-coefficient regularization.  Neural Comput., 22: 3221-3235,
2010.


%\bibitem[Taylor and Cristianini,2004]{Taylor2004}
%J. S. Taylor and N. Cristianini, Kernel Methods for Pattern
%Analysis. Cambridge University Press, Cambridge, 2004.

%
%\bibitem{Wang2000}
%K. Y. Wang and L. Q. Li, Harmonic Analysis and Approximation on The
%Unit Sphere. Science Press, Beijing, 2000.


\bibitem{Wu2005}
Q. Wu and D. X. Zhou. SVM soft margin classifiers: linear
programming versus quadratic programming. Neural Compu.,  17:
1160-1187, 2005.

\bibitem{Wu2006}
{Q. Wu,  Y. M. Ying and D. X. Zhou. Learning rates of least square
regularized regression. Found. Comput. Math., 6: 171-192, 2006.}

\bibitem{Wu2008}
Q. Wu and D. X. Zhou. Learning with sample dependent hypothesis
space. Comput. Math. Appl., 56: 2896-2907, 2008.


\bibitem{Xiang2009}
D. H. Xiang and D. X. Zhou. Classification with Gaussians and convex
loss. J. Mach. Learn. Res., 10: 1447-1468.






\bibitem{Xiao2010}
Q. W. Xiao and D. X. Zhou. Learning by nonsymmetric kernel with data
dependent spaces and $l^1$-regularizer. Taiwanese J. Math., 14:
1821-1836, 2010.






%\bibitem{Xu1998}
%Y. Xu, Orthogonal polynomials and cubature formulae on spheres and
%on balls, SIAM J. Math. Anal. 29 (1998), 779-793.

%\bibitem{Xu1999}
%Y. Xu. Summability of Fourier orthogonal series for Jacobi weight on
%a ball in $\mathbf R^d$. Trans. Amer. Math. Soc., 351: 2439-2458,
%1999.
%
%
%\bibitem{Xu2001}
%Y. Xu. Orthogonal polynomials and cubature formulae on balls,
%simplices, and spheres. J. Comput. Appl. Math., 127: 349-368, 2001.



\bibitem{Xu2012}
Z. B. Xu, X. Y. Chang, F. M. Xu and H. Zhang. $L_{1/2}$
regularization: a thresholding representation theory and a fast
solver. IEEE. Trans. Neural netw \& Learn. system.,  23: 1013-1018,
2012.


 \bibitem{Ye2008}
G. B. Ye and D. X. Zhou. Learning and approximation by Gaussians on
Riemannian manifolds. Adv. Comput. Math.,  29: 291-310, 2008.

\bibitem{Zhang2009}
H. Z. Zhang, Y. S. Xu and J. Zhang. Reproducing kernel Banach spaces
for Machine learning. J. Mach. Learn. Res., 10: 2741-2775, 2009.


\bibitem{Zhou2002}
{D. X. Zhou. The covering number in learning theory. J. Complex.,
18:   739-767, 2002}

\bibitem{Zhou2006}
{D. X. Zhou and K. Jetter. Approximation with polynomial kernels and
SVM classifiers. Adv. Comput. Math., 25: 323-344, 2006.}


%\bibitem{Zhou2003} D. X. Zhou, capacity of reproducing kernel
%spaces in learning theory, IEEE. Trans. Inform. Theory, 49 (2003),
%1743-1752.



\end{thebibliography}
\end{document}